\newtheorem{theorem}{Theorem}
\newcommand{\argmax}{\operatorname*{argmax}}
\newcommand{\E}{\mathbb{E}}
\newcommand{\g}{\mathcal{G}}
\newcommand{\Epsilon}{\mathcal{E}}
\newcommand{\s}{\mathcal{S}}
\newcommand{\A}{\mathcal{A}}
\newcommand{\p}{P}
\newcommand{\pos}{\mathcal{P}}
\newcommand{\rr}{\mathcal{R}}
\newcommand{\J}{\mathcal{J}}
\title{Is the Bellman residual a bad proxy?}
\author{Matthieu Geist$^{1}$, Bilal Piot$^{2,3}$ and Olivier Pietquin $^{2,3}$\\
$^1$ Université de Lorraine \& CNRS, LIEC, UMR 7360, Metz, F-57070 France\\
$^2$ Univ. Lille, CNRS, Centrale Lille, Inria, UMR 9189 - CRIStAL, F-59000 Lille, France\\
$^3$ Now with Google DeepMind, London, United Kingdom\\
\texttt{matthieu.geist@univ-lorraine.fr} \\\texttt{bilal.piot@univ-lille1.fr}, \texttt{olivier.pietquin@univ-lille1.fr}
}
\begin{document}

\maketitle

\begin{abstract}
This paper aims at theoretically and empirically comparing two standard optimization criteria for Reinforcement Learning: i) maximization of the mean value
and ii) minimization of the Bellman residual. For that purpose, we place ourselves in the framework of policy search algorithms, that are usually designed to maximize the mean value, and derive a method that minimizes the residual $\|T_* v_\pi - v_\pi\|_{1,\nu}$ over policies. A theoretical analysis shows how good this proxy is to policy optimization, and notably that it is better than its value-based counterpart. We also propose experiments on randomly generated generic Markov decision processes, specifically designed for studying the influence of the involved concentrability coefficient. They show that the Bellman residual is generally a bad proxy to policy optimization and that directly maximizing the mean value is much better, despite the current lack of deep theoretical analysis. This might seem obvious, as directly addressing the problem of interest is usually better, but given the prevalence of (projected) Bellman residual minimization in value-based reinforcement learning, we believe that this question is worth to be considered.
\end{abstract}

\section{Introduction}
\label{sec:intro}
Reinforcement Learning (RL) aims at estimating a policy $\pi$ close to the optimal one, in the sense that its value, $v_\pi$ (the expected discounted return), is close to maximal, \textit{i.e} $\|v_* - v_\pi\|$ is small ($v_*$ being the optimal value), for some norm.
Controlling the residual $\|T_* v_\theta - v_\theta\|$ (where $T_*$ is the optimal Bellman operator and $v_\theta$ a value function parameterized by $\theta$) over a class of parameterized value functions is a classical approach in value-based RL, and especially in Approximate Dynamic Programming (ADP). Indeed, controlling this residual allows controlling the distance to the optimal value function: generally speaking, we have that
\begin{equation}
  \|v_* - v_{\pi_{v_\theta}}\| \leq \frac{C}{1-\gamma} \|T_*v_\theta - v_\theta\|,\label{eq:proxy_value}
\end{equation}
with the policy $\pi_{v_\theta}$ being greedy with respect to $v_\theta$~\cite{munos2007performance,piot2014dcrl}.

Some classical ADP approaches actually minimize a projected Bellman residual, $\|\Pi(T_* v_\theta - v_\theta)\|$, where $\Pi$ is the operator projecting onto the hypothesis space to which $v_\theta$ belongs: Approximate Value Iteration (AVI)~\cite{Gordon:1995,Ernst:2005b} tries to minimize this using a fixed-point approach, $v_{\theta_{k+1}} = \Pi T_* v_{\theta_k}$,  and it has been shown recently~\cite{perolat2016newton} that Least-Squares Policy Iteration (LSPI)~\cite{Lagoudakis:2003b} tries to minimize it using a Newton approach\footnote{(Exact) policy iteration actually minimizes $\|T_* v - v\|$ using a Newton descent~\cite{filar1991algorithm}.}. Notice that in this case (projected residual), there is no general performance bound\footnote{With a single action, this approach reduces to LSTD (Least-Squares Temporal Differences)~\cite{Bradtke:1996}, that can be arbitrarily bad in an off-policy setting~\cite{scherrer2010should}.} for controlling $\|v_* - v_{\pi_{v_\theta}}\|$.

Despite the fact that (unprojected) residual approaches come easily with performance guarantees, they are not extensively studied in the (value-based) literature (one can mention~\cite{Baird:1995} that considers a subgradient descent or~\cite{piot2014dcrl} that frames the norm of the residual as a delta-convex function). A reason for this is that they lead to biased estimates when the Markovian transition kernel is stochastic and unknown~\cite{Antos:2008}, which is a rather standard case.
Projected Bellman residual approaches are more common, even if not introduced as such originally (notable exceptions are~\cite{maei2010toward,perolat2016newton}).

An alternative approach consists in maximizing directly the mean value $\E_\nu[v_\pi(S)]$ for a user-defined state distribution $\nu$, this being equivalent to directly minimizing $\|v_* - v_\pi\|_{1,\nu}$, see Sec.~\ref{sec:background}. This suggests defining a class of parameterized policies and optimizing over them, which is the predominant approach in policy search\footnote{A remarkable aspect of policy search is that it does not necessarily rely on the Markovian assumption, but this is out of the scope of this paper (residual approaches rely on it, through the Bellman equation). Some recent and effective approaches build on policy search, such as deep deterministic policy gradient~\cite{lillicrap2015continuous} or trust region policy optimization~\cite{schulman2015trust}. Here, we focus on the canonical mean value maximization approach.}~\cite{deisenroth2013survey}.

This paper aims at theoretically and experimentally studying these two approaches: maximizing the mean value (related algorithms operate on policies) and minimizing the residual (related algorithms operate on value functions). In that purpose, we place ourselves in the context of policy search algorithms. We adopt this position because we could derive a method that minimizes the residual $\|T_* v_\pi - v_\pi\|_{1,\nu}$ over policies and compare to other methods that usually maximize the mean value. On the other hand, adapting ADP methods so that they maximize the mean value is way harder\footnote{Approximate linear programming could be considered as such but is often computationally intractable~\cite{Desai:2012:ADP:2340685.2340698,de2003linear}.}. This new approach is presented in Sec.~\ref{sec:rps},
and we show theoretically how good this proxy.

In Sec.~\ref{sec:experiments}, we conduct experiments on randomly generated generic Markov decision processes to compare both approaches empirically. The experiments are specifically designed to study the influence of the involved concentrability coefficient. Despite the good theoretical properties of the Bellman residual approach, it turns out that it only works well if there is a good match between the sampling distribution and the discounted state occupancy distribution induced by the optimal policy, which is a very limiting requirement.
In comparison, maximizing the mean value is rather insensitive to this issue and works well whatever the sampling distribution is, contrary to what suggests the sole related theoretical bound. This study thus suggests that maximizing the mean value, although it doesn't provide easy theoretical analysis, is a better approach to build efficient and robust RL algorithms.

\section{Background}
\label{sec:background}

\subsection{Notations}

Let $\Delta_X$ be the set of probability distributions over a finite set $X$ and $Y^X$ the set of applications from $X$ to the set $Y$. By convention, all vectors are column vectors, except distributions (for left multiplication). A Markov Decision Process (MDP) is a tuple $\{\s,\A,\p,\rr, \gamma\}$, where $\s$ is the finite state space\footnote{This choice is done for ease and clarity of exposition, the following results could be extended to continuous state and action spaces.}, $\A$ is the finite action space, $\p\in(\Delta_\s)^{\s\times \A}$ is the Markovian transition kernel ($\p(s'|s,a)$ denotes the probability of transiting to $s'$ when action $a$ is applied in state $s$), $\rr\in\mathbb{R}^{\s\times\A}$ is the bounded reward function ($\rr(s,a)$ represents the local benefit of doing action $a$ in state $s$) and $\gamma\in(0,1)$ is the discount factor. For $v\in\mathbb{R}^\s$, we write $\|v\|_{1,\nu} = \sum_{s\in\s} \nu(s) |v(s)|$ the $\nu$-weighted $\ell_1$-norm of $v$.

Notice that when the function $v\in\mathbb{R}^\s$ is componentwise positive, that is $v\geq 0$, the $\nu$-weighted $\ell_1$-norm of $v$ is actually its expectation with respect to~$\nu$: if $v \geq 0$, then $\|v\|_{1,\nu} = \E_\nu[v(S)]  = \nu v$.
We will make an intensive use of this basic property in the following.

A stochastic policy $\pi \in (\Delta_\A)^\s$ associates a distribution over actions to each state. The policy-induced reward and transition kernels, $\rr_\pi\in\mathbb{R}^\s$ and $\p_\pi \in (\Delta_\s)^\s$, are defined as
\begin{equation}
\rr_\pi(s) = \E_{\pi(.|s)}[\rr(s,A)]
\text{ and } \p_\pi(s'|s) = \E_{\pi(.|s)}[\p(s'|s,A)].
\end{equation}
The quality of a policy is quantified by the associated value function $v_\pi\in\mathbb{R}^\s$:
\begin{equation}
  v_\pi(s) = \E[\sum_{t\geq 0} \gamma^t \rr_\pi(S_t)|S_0 = s, S_{t+1}\sim \p_\pi(.|S_t)].
\end{equation}
The value $v_\pi$ is the unique fixed point of the Bellman operator $T_\pi$, defined as $T_\pi v = \rr_\pi + \gamma \p_\pi v$ for any $v\in\mathbb{R}^\s$.
Let define the second Bellman operator $T_*$ as, for any $v\in\mathbb{R}^\s$, $T_*v = \max_{\pi\in(\Delta_\A)^\s} T_\pi v$.
A policy $\pi$ is greedy with respect to $v\in\mathbb{R}^\s$, denoted $\pi\in\g(v)$ if $T_{\pi} v = T_* v$. There exists an optimal policy $\pi_*$ that satisfies componentwise $v_{\pi_*} \geq v_\pi$, for all $\pi\in(\Delta_\A)^\s$. Moreover, we have that $\pi_* \in \g(v_*)$, with $v_*$ being the unique fixed point of $T_*$.

Finally, for any distribution $\mu\in\Delta_\s$, the $\gamma$-weighted occupancy measure induced by the policy $\pi$ when the initial state is sampled from $\mu$ is defined as
\begin{equation}
  d_{\mu,\pi} = (1-\gamma) \mu \sum_{t\geq 0} \gamma^t \p_\pi^t = (1-\gamma)\mu (I-\gamma \p_\pi)^{-1} \in \Delta_\s.
\end{equation}
For two distributions $\mu$ and $\nu$, we write $\|\frac{\mu}{\nu}\|_\infty$ the smallest constant $C$ satisfying, for all $s\in\s$, $\mu(s) \leq C \nu(s)$. This quantity measures the mismatch between the two distributions.

\subsection{Maximizing the mean value}

Let $\pos$ be a space of parameterized stochastic policies and let $\mu$ be a distribution of interest. The optimal policy has a higher value than any other policy, for \emph{any} state. If the MDP is too large, satisfying this condition is not reasonable. Therefore, a natural idea consists in searching for a policy such that the associated value function is as close as possible to the optimal one, \emph{in expectation}, according to a distribution of interest $\mu$. More formally, this means minimizing
$\|v_* - v\|_{1,\mu} = \E_\mu[v_*(S) - v_\pi(S)]\geq 0$.
The optimal value function being unknown, one cannot address this problem directly, but it is equivalent to maximizing $\E_{\mu}[v_\pi(S)]$.

This is the basic principle of many policy search approaches:
\begin{equation}
  \max_{\pi\in\pos} J_\nu(\pi) \text{ with } J_\nu(\pi) = \E_{\nu}[v_\pi(S)] = \nu v_\pi.
\end{equation}
Notice that we used a \emph{sampling} distribution $\nu$ here, possibly different from the distribution \emph{of interest} $\mu$. Related algorithms differ notably by the considered criterion (\textit{e.g.}, it can be the mean reward rather than the $\gamma$-discounted cumulative reward considered here) and by how the corresponding optimization problem is solved. We refer to~\cite{deisenroth2013survey} for a survey on that topic.

Contrary to ADP, the theoretical efficiency of this family of approaches has not been studied a lot. Indeed, as far as we know, there is a sole performance bound for maximizing the mean value.

\begin{theorem}[Scherrer and Geist~\cite{scherrer2014local}]
  \label{th:ps}
  Assume that the policy space $\pos$ is stable by stochastic mixture, that is
  $\forall \pi,\pi' \in \pos, \forall \alpha \in (0,1), \quad (1-\alpha)\pi + \alpha \pi' \in \pos$.
  Define the $\nu$-greedy-complexity of the policy space $\pos$ as
  \begin{equation}
    \Epsilon_\nu(\pos) = \max_{\pi\in\pos} \min_{\pi'\in\pos} d_{\nu,\pi}(T_* v_\pi - T_{\pi'} v_\pi).
  \end{equation}
  Then, any policy $\pi$ that is an $\epsilon$-local optimum of $J_\nu$, in the sense that
  \begin{equation}
    \forall \pi'\in\Pi, \quad \lim_{\alpha\rightarrow 0} \frac{\nu v_{(1-\alpha) \pi + \alpha \pi'} - \nu v_\pi}{\alpha} \leq \epsilon,
  \end{equation}
  enjoys the following global performance guarantee:
  \begin{equation}
    \mu(v_* - v_\pi) \leq \frac{1}{(1-\gamma)^2} \left\|\frac{d_{\mu,\pi*}}{\nu}\right\|_\infty\left(\Epsilon_\nu(\pos) + \epsilon\right).
  \end{equation}
\end{theorem}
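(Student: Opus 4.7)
The plan is two applications of a performance-difference identity combined with a change of measure. The identity I would establish first is: for any two policies $\pi,\pi'$, since $v_{\pi'}-v_\pi=(I-\gamma \p_{\pi'})^{-1}(T_{\pi'}v_\pi-v_\pi)$, left-multiplication by $\nu$ gives
\begin{equation}
\nu(v_{\pi'}-v_\pi)=\frac{1}{1-\gamma}\,d_{\nu,\pi'}(T_{\pi'}v_\pi-v_\pi).
\end{equation}
This is the only nontrivial identity needed.

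Next I would differentiate. For $\pi_\alpha=(1-\alpha)\pi+\alpha\pi'$, which lies in $\pos$ by the stability assumption, $\rr_{\pi_\alpha}$ and $\p_{\pi_\alpha}$ depend linearly on $\alpha$, hence $T_{\pi_\alpha}v_\pi=(1-\alpha)T_\pi v_\pi+\alpha T_{\pi'}v_\pi$. Using $v_\pi=T_\pi v_\pi$, this yields $T_{\pi_\alpha}v_\pi-v_\pi=\alpha(T_{\pi'}v_\pi-v_\pi)$; plugging $\pi_\alpha$ into the identity and letting $\alpha\to 0$ produces
\begin{equation}
\lim_{\alpha\to 0}\frac{\nu v_{\pi_\alpha}-\nu v_\pi}{\alpha}=\frac{1}{1-\gamma}\,d_{\nu,\pi}(T_{\pi'}v_\pi-v_\pi),
\end{equation}
so $\epsilon$-local optimality translates into $d_{\nu,\pi}(T_{\pi'}v_\pi-v_\pi)\leq(1-\gamma)\epsilon$ for every $\pi'\in\pos$. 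Picking $\pi'$ witnessing the definition of $\Epsilon_\nu(\pos)$ so that $d_{\nu,\pi}(T_*v_\pi-T_{\pi'}v_\pi)\leq\Epsilon_\nu(\pos)$ and adding the two bounds yields $d_{\nu,\pi}(T_*v_\pi-v_\pi)\leq\Epsilon_\nu(\pos)+(1-\gamma)\epsilon$.

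Finally, I would apply the identity a second time to $\pi_*$ versus $\pi$ under $\mu$: using $T_{\pi_*}v_\pi\leq T_*v_\pi$ and that $T_*v_\pi-v_\pi\geq 0$ (since $T_*v_\pi\geq T_\pi v_\pi=v_\pi$), one obtains $\mu(v_*-v_\pi)\leq\frac{1}{1-\gamma}\,d_{\mu,\pi_*}(T_*v_\pi-v_\pi)$. Non-negativity of the integrand allows two successive changes of measure: first $d_{\mu,\pi_*}\to d_{\nu,\pi}$, paying a factor $\|d_{\mu,\pi_*}/d_{\nu,\pi}\|_\infty$, and then $d_{\nu,\pi}\to\nu$ using the componentwise inequality $d_{\nu,\pi}\geq(1-\gamma)\nu$ (immediate from the series $d_{\nu,\pi}=(1-\gamma)\nu\sum_{t\geq 0}\gamma^t \p_\pi^t$), which costs another $\frac{1}{1-\gamma}$. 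Chaining all pieces delivers the claimed $\frac{1}{(1-\gamma)^2}\|d_{\mu,\pi_*}/\nu\|_\infty(\Epsilon_\nu(\pos)+\epsilon)$ bound (up to absorbing the $(1-\gamma)$ factor on $\epsilon$).

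The main obstacle is really the first real step: turning the one-sided directional-derivative condition into a clean inequality on Bellman operators. This is where the stability-by-stochastic-mixture assumption is essential, since without it $\pi_\alpha\notin\pos$ and the derivative formula would have no exploitable content. Everything afterwards is bookkeeping: both changes of measure rely on the non-negativity of $T_*v_\pi-v_\pi$, and the $(1-\gamma)^{-2}$ emerges naturally, one factor from each invocation of the series inversion of $(I-\gamma \p_\pi)$.
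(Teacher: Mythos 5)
Your proof is correct; note that the paper does not actually prove this theorem itself (it is imported from Scherrer and Geist~\cite{scherrer2014local}), so the only in-paper comparison point is the appendix proof of Th.~\ref{th:rps}, and your argument uses exactly that machinery --- the identity $v_{\pi'}-v_\pi=(I-\gamma \p_{\pi'})^{-1}(T_{\pi'}v_\pi-v_\pi)$ plus non-negativity-based changes of measure --- augmented with the directional-derivative computation along mixtures $(1-\alpha)\pi+\alpha\pi'$, which is precisely the route of the cited reference. The bookkeeping checks out (you even retain a slightly sharper $(1-\gamma)\epsilon$ in place of $\epsilon$), so nothing further is needed.
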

This bound (as all bounds of this kind) has three terms: an horizon term, a concentrability term and an error term.
The term $\frac{1}{1-\gamma}$ is the average optimization horizon. This concentrability coefficient ($\|d_{\mu,\pi*}/\nu\|_\infty$) measures the mismatch between the used distribution $\nu$ and the $\gamma$-weighted occupancy measure induced by the \emph{optimal} policy $\pi_*$ when the initial state is sampled from the distribution of interest $\mu$. This tells that if $\mu$ is the distribution of interest, one should optimize $J_{d_{\mu,\pi_*}}$, which is not feasible, $\pi_*$ being unknown (in this case, the coefficient is equal to 1, its lower bound). This coefficient can be arbitrarily large: consider the case where $\mu$ concentrates on a single starting state (that is $\mu(s_0) = 1$ for a given state $s_0$) and such that the optimal policy leads to other states (that is, $d_{\mu,\pi_*}(s_0)<1$), the coefficient is then infinite. However, it is also the best concentrability coefficient according to~\cite{scherrer2014approximate}, that provides a theoretical and empirical comparison of Approximate Policy Iteration (API) schemes. The error term is $\Epsilon_\nu(\pos) + \epsilon$, where $\Epsilon_\nu(\pos)$ measures the capacity of the policy space to represent the policies being greedy with respect to the value of any policy in $\pos$ and $\epsilon$ tells how the computed policy $\pi$ is close to a local optimum of $J_\nu$.

There exist other policy search approches, based on ADP rather than on maximizing the mean value, such as Conservative Policy Iteration (CPI)~\cite{kakade2002approximately} or Direct Policy Iteration (DPI)~\cite{lazaric2010analysis}. The bound of Thm.~\ref{th:ps} matches the bounds of DPI or CPI. Actually, CPI can be shown to be a boosting approach maximizing the mean value. See the discussion in~\cite{scherrer2014local} for more details. However, this bound is also based on a very strong assumption (stability by stochastic mixture of the policy space) which is not satisfied by all commonly used policy parameterizations.

\section{Minimizing the Bellman residual}
\label{sec:rps}

Direct maximization of the mean value operates on policies, while residual approaches operate on value functions. To study these two optimization criteria together, we introduce a policy search method that minimizes a residual. As noted before, we do so because it is much simpler than introducing a value-based approach that maximizes the mean value.
We also show how good this proxy is to policy optimization.
Although this algorithm is new, it is not claimed to be a core contribution of the paper. Yet it is clearly a mandatory step to support the comparison between optimization criteria.

\subsection{Optimization problem}

We propose to search a policy in $\pos$ that minimizes the following Bellman residual:
\begin{equation}
  \min_{\pi\in\pos}\J_\nu(\pi) \text{ with } \J_\nu(\pi) = \|T_* v_\pi - v_\pi\|_{1,\nu}. 
\end{equation}
Notice that, as for the maximization of the mean value, we used a \emph{sampling} distribution $\nu$, possibly different from the distribution \emph{of interest} $\mu$.

From the basic properties of the Bellman operator, for any policy $\pi$ we have that $T_*v_\pi \geq v_\pi$. Consequently, the $\nu$-weighted $\ell_1$-norm of the residual is indeed the \emph{expected} Bellman residual:
\begin{equation}
  \J_\nu(\pi) = \E_\nu[[T_* v_\pi](S) - v_\pi(S)] = \nu(T_* v_\pi - v_\pi).
\end{equation}
Therefore, there is naturally no bias problem for minimizing a residual here, contrary to other residual approaches~\cite{Antos:2008}. This is an interesting result on its own, as removing the bias in value-based residual approaches is far from being straightforward. This results from the optimization being done over policies and not over values, and thus from $v_\pi$ being an actual value (the one of the current policy) obeying to the Bellman equation\footnote{The property $T_*v \geq v$ does not hold if $v$ is not the value function of a given policy, as in value-based approaches.}.

Any optimization method can be envisioned to minimize $\J_\nu$. Here, we simply propose to apply a subgradient descent (despite the lack of convexity). 
\begin{theorem}[Subgradient of $\J_\nu$]
  Recall that given the considered notations, the distribution $\nu \p_{\g(v_\pi)}$ is the state distribution obtained by sampling the initial state according to $\nu$, applying the action being greedy with respect to $v_\pi$ and following the dynamics to the next state. This being said,  the subgradient of $\J_\nu$ is given by
  \begin{equation}
    -\nabla \J_\nu(\pi) =
    \frac{1}{1-\gamma}\sum_{s,a}\left(d_{\nu,\pi}(s) - \gamma d_{\nu \p_{\g(v_\pi)}, \pi}(s)\right) \pi(a|s) \nabla \ln \pi(a|s) q_\pi(s,a),
  \end{equation}
  with $q_\pi(s,a) = \rr(s,a) + \gamma \sum_{s'\in\s} \p(s'|s,a) v_\pi(s')$ the state-action value function.
\end{theorem}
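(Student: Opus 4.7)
The plan is to exploit the fact that $T_* v_\pi \geq v_\pi$ pointwise (already noted in the excerpt), so the absolute value disappears and the criterion splits as
\begin{equation}
\J_\nu(\pi) = \nu(T_*v_\pi - v_\pi) = \nu T_*v_\pi - \nu v_\pi.
\end{equation}
The second term is nothing but $J_\nu(\pi)$, the ordinary mean value, whose gradient is given by the standard policy gradient theorem. The first term requires a careful treatment of the $\max$ inside $T_*$, which is where the subgradient (rather than gradient) enters. Overall the strategy is: differentiate each piece separately, then recombine.

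For the $-\nabla[\nu v_\pi]$ piece I would just invoke the policy gradient theorem in its distributional form: for any distribution $\mu$,
\begin{equation}
\nabla[\mu v_\pi] = \frac{1}{1-\gamma}\sum_{s,a} d_{\mu,\pi}(s)\,\pi(a|s)\,\nabla\ln\pi(a|s)\,q_\pi(s,a),
\end{equation}
applied with $\mu = \nu$. For $\nabla[\nu T_*v_\pi]$ I would write $T_*v_\pi = \max_{\pi'} T_{\pi'} v_\pi$ and apply Danskin's (envelope) theorem. Picking any $\pi'\in\g(v_\pi)$, this yields the subgradient
\begin{equation}
\nabla[\nu T_*v_\pi] = \nabla_\pi\bigl[\nu(\rr_{\pi'} + \gamma P_{\pi'} v_\pi)\bigr]\Big|_{\pi' \text{ held fixed}} = \gamma\,(\nu P_{\g(v_\pi)})\,\nabla v_\pi,
\end{equation}
where the reward term contributes nothing because $\pi'$ is held fixed and $\rr_{\pi'}$ then does not depend on $\pi$. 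Recognizing $\nu P_{\g(v_\pi)}$ as a distribution over states and re-applying the distributional policy gradient theorem with $\mu = \nu P_{\g(v_\pi)}$ gives
\begin{equation}
\nabla[\nu T_*v_\pi] = \frac{\gamma}{1-\gamma}\sum_{s,a} d_{\nu P_{\g(v_\pi)},\pi}(s)\,\pi(a|s)\,\nabla\ln\pi(a|s)\,q_\pi(s,a).
\end{equation}
Subtracting the two contributions, factoring out the common $\pi(a|s)\nabla\ln\pi(a|s)q_\pi(s,a)$, and pulling an overall minus sign yields exactly the stated expression.

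The main obstacle is the non-smoothness of $T_*$: the greedy policy $\g(v_\pi)$ can fail to be unique, and even when it is, it is not a smooth function of $\pi$. This is precisely why the statement gives \emph{a} subgradient rather than \emph{the} gradient. I would handle this with Danskin's theorem in its subdifferential form, which tells us that for any maximizer $\pi' \in \g(v_\pi)$, the partial gradient with $\pi'$ frozen is a valid element of the Clarke subdifferential of $v \mapsto \nu T_* v$ at $v = v_\pi$; the variations of $\pi'$ with respect to $v$ need not be considered. Everything else is routine: it is the chain rule combined with the standard derivation of $\nabla v_\pi$ from the Bellman equation $v_\pi = (I-\gamma P_\pi)^{-1}\rr_\pi$, which underlies the distributional policy gradient theorem used twice above.
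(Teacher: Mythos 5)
Your proposal is correct and follows essentially the same route as the paper's proof: the same decomposition $\J_\nu(\pi)=\nu T_*v_\pi-\nu v_\pi$, the same envelope/subgradient argument to freeze the maximizer (the paper does it state-by-state via $a^*_s\in\argmax_a q_\pi(s,a)$, you do it via Danskin applied to $\max_{\pi'}T_{\pi'}v_\pi$, which is the same thing), and the same double application of the policy gradient theorem, once with $\nu$ and once with $\nu \p_{\g(v_\pi)}$.
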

\begin{proof}
  The proof relies on basic (sub)gradient calculus, it is given in the appendix.
\end{proof}

There are two terms in the negative subgradient $-\nabla\J_{\nu}$: the first one corresponds to the gradient of $J_\nu$, the second one (up to the multiplication by $-\gamma$) is the gradient of $J_{\nu \p_{\g(v_\pi)}}$ and acts as a kind of correction. This subgradient can be estimated using Monte Carlo rollouts, but doing so is harder than for classic policy search (as it requires additionally sampling from $\nu \p_{\g(v_\pi)}$, which requires estimating the state-action value function). Also, this gradient involves computing the maximum over actions (as it requires sampling from $\nu \p_{\g(v_\pi)}$, that comes from explicitly considering the Bellman optimality operator), which prevents from extending easily this approach to continuous actions, contrary to classic policy search.

Thus, from an algorithmic point of view, this approach has  drawbacks. Yet, we do not discuss further how to efficiently estimate this subgradient since we introduced this approach for the sake of comparison to standard policy search methods only. For this reason, we will consider an ideal algorithm in the experimental section where an analytical computation of the subgradient is possible, see Sec.~\ref{sec:experiments}. This will place us in an unrealistically good setting, which will help focusing on the main conclusions.
Before this, we study how good this proxy is to policy optimization.

\subsection{Analysis}

\begin{theorem}[Proxy bound for residual policy search]
  \label{th:rps}
  We have that
  \begin{equation}
    \|v_* - v_\pi\|_{1,\mu} \leq \frac{1}{1-\gamma} \left\|\frac{d_{\mu,\pi_*}}{\nu}\right\|_\infty \J_\nu(\pi)
    = \frac{1}{1-\gamma} \left\|\frac{d_{\mu,\pi_*}}{\nu}\right\|_\infty \|T_* v_\pi - v_\pi\|_{1,\nu}.
  \end{equation}
\end{theorem}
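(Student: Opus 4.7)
The plan is to derive a componentwise inequality for $v_* - v_\pi$ in terms of the Bellman residual $T_* v_\pi - v_\pi$, take expectations, and then swap the sampling distribution using the concentrability coefficient. The two nonnegativity properties that make everything clean are $v_* \geq v_\pi$ (so $\|v_* - v_\pi\|_{1,\mu} = \mu(v_* - v_\pi)$) and $T_* v_\pi \geq v_\pi$ (so $\|T_* v_\pi - v_\pi\|_{1,\nu} = \nu(T_* v_\pi - v_\pi)$), both of which are already highlighted in the paper.

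First, I would write the standard one-step decomposition using an optimal policy $\pi_*$:
\begin{equation}
v_* - v_\pi = T_{\pi_*} v_* - T_\pi v_\pi = \gamma \p_{\pi_*}(v_* - v_\pi) + (T_{\pi_*} v_\pi - v_\pi),
\end{equation}
and bound $T_{\pi_*} v_\pi - v_\pi \leq T_* v_\pi - v_\pi$ componentwise, which gives
\begin{equation}
(I - \gamma \p_{\pi_*})(v_* - v_\pi) \leq T_* v_\pi - v_\pi.
\end{equation}
Since $(I - \gamma \p_{\pi_*})^{-1} = \sum_{t\geq 0} \gamma^t \p_{\pi_*}^t$ is a nonnegative operator, I can apply it to both sides to obtain
\begin{equation}
v_* - v_\pi \leq (I - \gamma \p_{\pi_*})^{-1}(T_* v_\pi - v_\pi).
\end{equation}

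Next, I would left-multiply by $\mu$ and use the definition $d_{\mu,\pi_*} = (1-\gamma)\mu(I-\gamma \p_{\pi_*})^{-1}$ to rewrite the right-hand side as $\frac{1}{1-\gamma} d_{\mu,\pi_*}(T_* v_\pi - v_\pi)$. Using $v_* - v_\pi \geq 0$ on the left turns the inner product $\mu(v_* - v_\pi)$ into $\|v_* - v_\pi\|_{1,\mu}$. This yields
\begin{equation}
\|v_* - v_\pi\|_{1,\mu} \leq \frac{1}{1-\gamma} d_{\mu,\pi_*}(T_* v_\pi - v_\pi).
\end{equation}

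Finally, I would perform the change of measure from $d_{\mu,\pi_*}$ to $\nu$. Because $T_* v_\pi - v_\pi \geq 0$ componentwise, the pointwise bound $d_{\mu,\pi_*}(s) \leq \|d_{\mu,\pi_*}/\nu\|_\infty \, \nu(s)$ survives multiplication by this nonnegative function and summation over $s$, giving
\begin{equation}
d_{\mu,\pi_*}(T_* v_\pi - v_\pi) \leq \left\|\frac{d_{\mu,\pi_*}}{\nu}\right\|_\infty \nu(T_* v_\pi - v_\pi) = \left\|\frac{d_{\mu,\pi_*}}{\nu}\right\|_\infty \J_\nu(\pi),
\end{equation}
which combined with the previous inequality is exactly the claim. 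There is no real obstacle here; the only subtlety is being careful that the change of measure step uses the nonnegativity of the residual, otherwise the $\ell_\infty$ bound on the density ratio would not control a signed integral. Everything else is a straightforward chaining of the Bellman equations and the definition of $d_{\mu,\pi_*}$.
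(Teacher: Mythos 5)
Your proof is correct and follows essentially the same route as the paper's: both hinge on the identity $v_* - v_\pi = (I-\gamma \p_{\pi_*})^{-1}(T_{\pi_*} v_\pi - v_\pi)$, the bound $T_{\pi_*} v_\pi \leq T_* v_\pi$, the definition of $d_{\mu,\pi_*}$, and a change of measure justified by the nonnegativity of the residual. The only cosmetic difference is that you establish the componentwise inequality before applying the nonnegative resolvent, whereas the paper applies the exact identity for a general $\pi'$ and specializes to $\pi_*$ at the end.
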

\begin{proof}
  The proof can be easily derived from the analyses of~\cite{kakade2002approximately}, \cite{munos2007performance} or~\cite{scherrer2014local}. We detail it for completeness in the appendix.
\end{proof}

This bound shows how controlling the residual helps in controlling the error. It has a linear dependency on the horizon and the concentrability coefficient is the best one can expect (according to~\cite{scherrer2014approximate}). It has the same form has the bounds for value-based residual minimization~\cite{munos2007performance,piot2014dcrl} (see also Eq.~\eqref{eq:proxy_value}). It is even better due to the involved concentrability coefficient (the ones for value-based bounds are worst, see~\cite{scherrer2014approximate} for a comparison).

Unfortunately, this bound is hardly comparable to the one of Th.~\ref{th:ps}, due to the error terms. In Th.~\ref{th:rps}, the error term (the residual) is a global error (how good is the residual as a proxy), whereas in Th.~\ref{th:ps} the error term is mainly a local error (how small is the gradient after minimizing the mean value). Notice also that Th.~\ref{th:rps} is roughly an intermediate step for proving Th.~\ref{th:ps}, and that it applies to any policy (suggesting that searching for a policy that minimizes the residual makes sense).
One could argue that a similar bound for mean value maximization would be something like: if $J_\mu(\pi) \geq \alpha$, then $\|v_* - v_\pi\|_{1,\mu}\leq \mu v_* - \alpha$. However, this is an oracle bound, as it depends on the unknown solution $v_*$. It is thus hardly exploitable.

The aim of this paper is to compare these two optimization approaches to RL. At a first sight, maximizing directly the mean value should be better (as a more direct approach). If the bounds of Th.~\ref{th:ps} and~\ref{th:rps} are hardly comparable, we can still discuss the involved terms. The horizon term is better (linear instead of quadratic) for the residual approach. Yet, an horizon term can possibly be hidden in the residual itself. Both bounds imply the same concentrability coefficient, the best one can expect. This is a very important term in RL bounds, often underestimated: as these coefficients can easily explode, minimizing an error makes sense only if it's not multiplied by infinity. This coefficient suggests that one should use $d_{\mu,\pi_*}$ as the sampling distribution. This is rarely reasonable, while using instead directly the distribution of interest is more natural.
Therefore, the experiments we propose on the next section focus on the influence of this concentrability coefficient.

\section{Experiments}
\label{sec:experiments}

We consider Garnet problems~\cite{archibald1995generation,bhatnagar2009natural}. They are a class of randomly built MDPs meant to be totally abstract while remaining representative of the problems that might be encountered in practice. Here, a Garnet $G(|\s|, |\A|, b)$ is specified by the number of states, the number of actions and the branching factor. For each $(s,a)$ couple, $b$ different next states are chosen randomly and the associated probabilities are set by randomly partitioning the unit interval. The reward is null, except for $10\%$ of states where it is set to a random value, uniform in $(1,2)$. We set $\gamma = 0.99$.

For the policy space, we consider a Gibbs parameterization:
$\pos = \{\pi_w: \pi_w(a|s) \propto e^{w^\top \phi(s,a)}\}$.
The features are also randomly generated, $F(d,l)$. First, we generate binary state-features $\varphi(s)$ of dimension $d$, such that $l$ components are set to 1 (the others are thus 0). The positions of the $1$'s are selected randomly such that no two states have the same feature. Then, the state-action features, of dimension $d|\A|$, are classically defined as
  $\phi(s,a) = \begin{pmatrix}
    0  \dots  0 & \varphi(s) & 0  \dots 0
  \end{pmatrix}^\top$,
the position of the zeros depending on the action.
Notice that in general this policy space is not stable by stochastic mixture, so the bound for policy search does not formally apply.

We compare classic policy search (denoted as PS($\nu$)), that maximizes the mean value, and residual policy search (denoted as RPS($\nu$)), that minimizes the mean residual. We optimize the relative objective functions with a normalized gradient ascent (resp. normalized subgradient descent) with a constant learning rate $\alpha = 0.1$. The gradients are computed analytically (as we have access to the model), so the following results represent an ideal case, when one can do an infinite number of rollouts. Unless said otherwise, the distribution $\mu\in\Delta_\s$ of interest is the uniform distribution.

\subsection{Using the distribution of interest}
\label{subsec:dist_unif}

First, we consider $\nu = \mu$. We generate randomly 100 Garnets $G(30,4,2)$ and 100 features $F(8,3)$. For each Garnet-feature couple, we run both algorithms for $T=1000$ iterations. For each algorithm, we measure two quantities: the (normalized) error $\frac{\|v_*-v_\pi\|_{1,\mu}}{\|v_*\|_{1,\mu}}$ (notice that as rewards are positive, we have $\|v_*\|_{1,\mu} = \mu v_*$) and the Bellman residual $\|T_* v_\pi - v_\pi\|_{1,\mu}$, where $\pi$ depends on the algorithm and on the iteration. We show the results (mean$\pm$standard deviation) on Fig.~\ref{fig:small_nu}.

\begin{figure}[tbh]
  \begin{minipage}[l]{0.24\linewidth}
    \includegraphics[width=\linewidth]{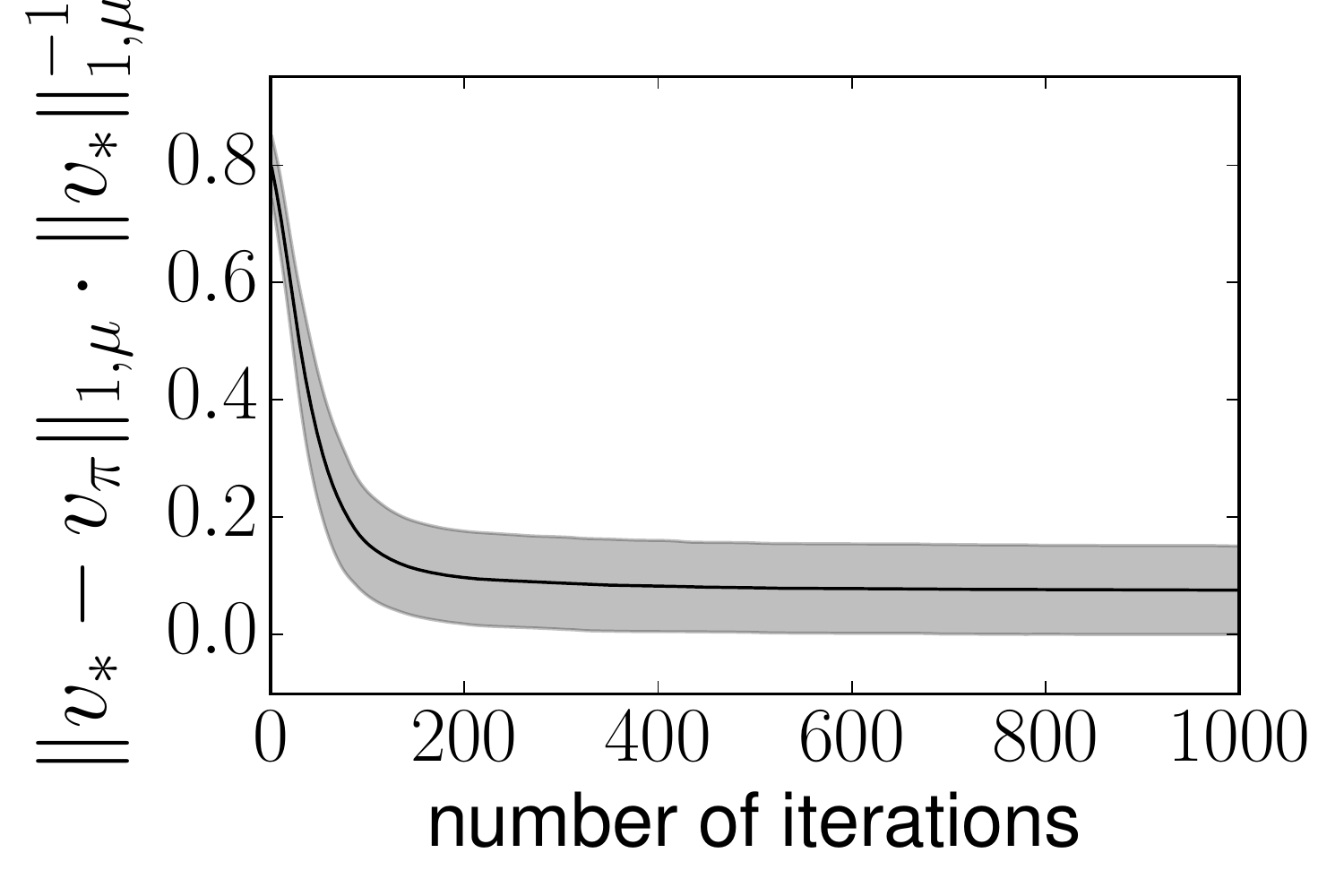}
    \center{a. Error for PS($\mu$).}
  \end{minipage}
  \begin{minipage}[l]{0.24\linewidth}
    \includegraphics[width=\linewidth]{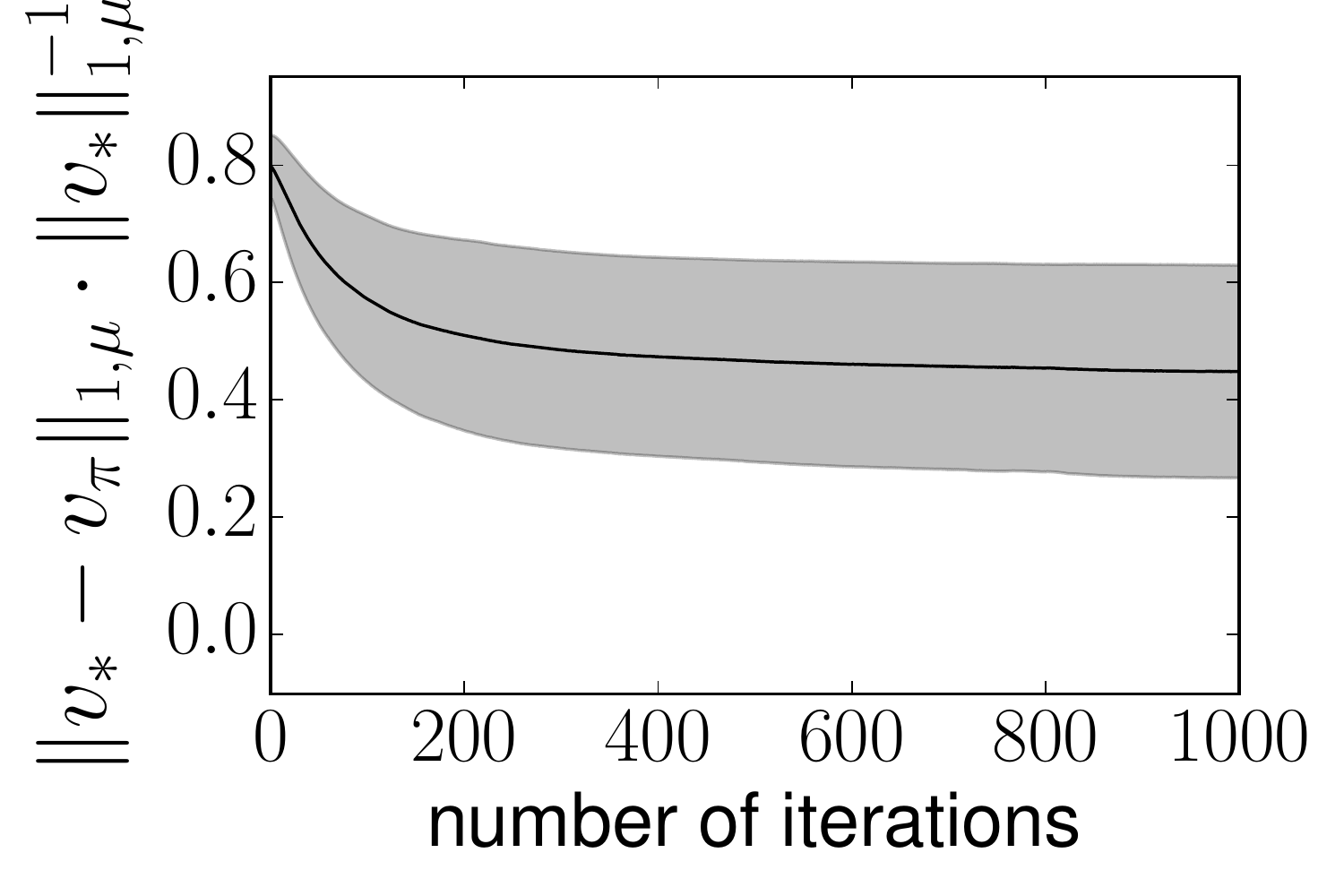}
    \center{b. Error for RPS($\mu$).}
  \end{minipage}
  \begin{minipage}[l]{0.24\linewidth}
    \includegraphics[width=\linewidth]{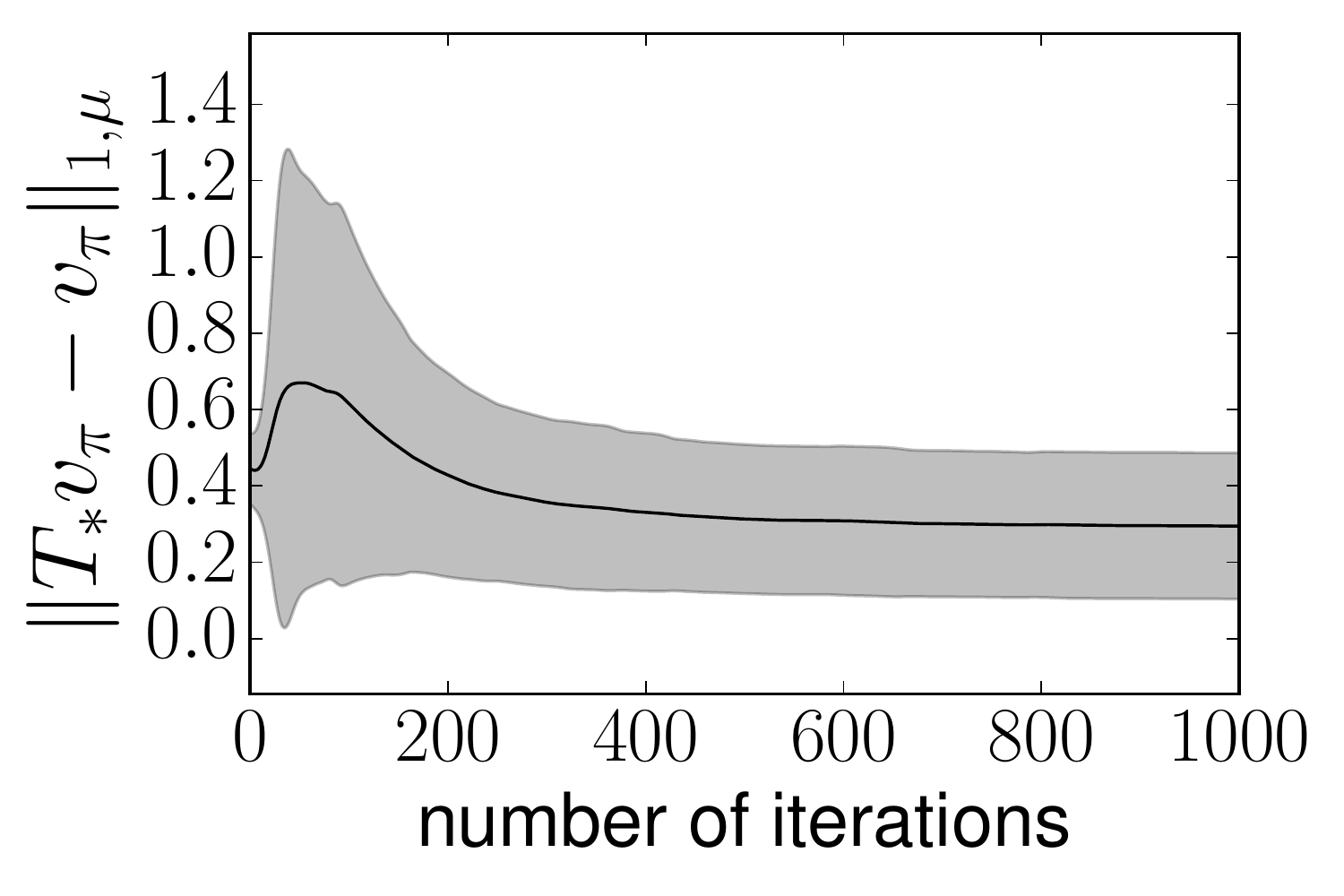}
    \center{c. Residual for PS($\mu$).}
  \end{minipage}
  \begin{minipage}[l]{0.24\linewidth}
    \includegraphics[width=\linewidth]{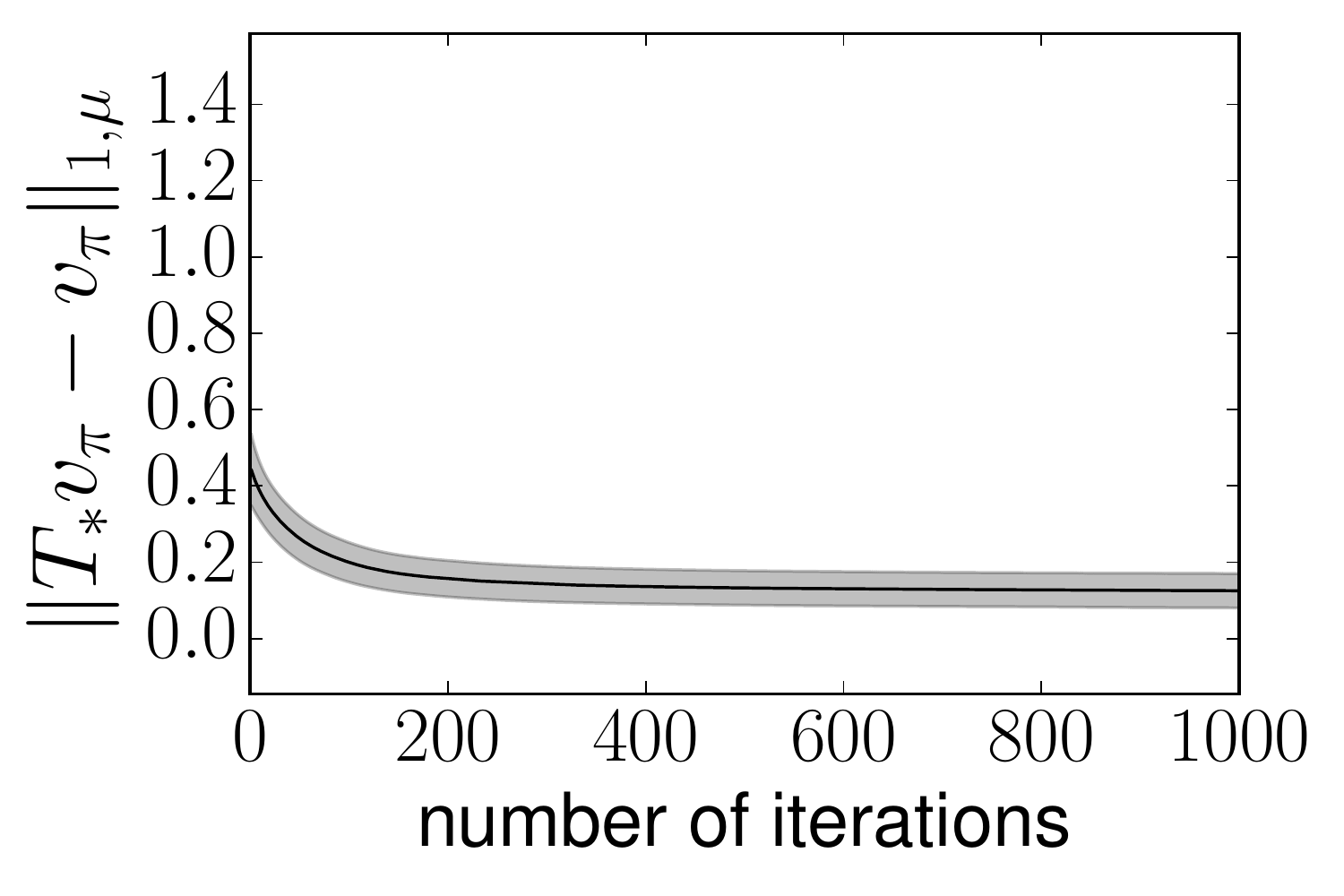}
    \center{d. Residual for RPS($\mu$).}
  \end{minipage}
  \caption{Results on the Garnet problems, when $\nu = \mu$. \label{fig:small_nu}}
\end{figure}

Fig.~\ref{fig:small_nu}.a shows that PS($\mu$) succeeds in decreasing the error. This was to be expected, as it is the criterion it optimizes. Fig.~\ref{fig:small_nu}.c shows how the residual of the policies computed by PS($\mu$) evolves. By comparing this to Fig.~\ref{fig:small_nu}.a, it can be observed that the residual and the error are not necessarily correlated: the error can decrease while the residual increases, and a low error does not necessarily involves a low residual.

Fig.~\ref{fig:small_nu}.d shows that RPS($\mu$) succeeds in decreasing the residual. Again, this is not surprising, as it is the optimized criterion. Fig.~\ref{fig:small_nu}.b shows how the error of the policies computed by RPS($\mu$) evolves. Comparing this to Fig.~\ref{fig:small_nu}.d, it can be observed that decreasing the residual lowers the error: this is consistent with the bound of Thm.~\ref{th:rps}.

Comparing Figs.~\ref{fig:small_nu}.a and~\ref{fig:small_nu}.b, it appears clearly that RPS($\mu$) is less efficient than PS($\mu$) for decreasing the error. This might seem obvious, as PS($\mu$) directly optimizes the criterion of interest. However, when comparing the errors and the residuals for each method, it can be observed that they are not necessarily correlated. Decreasing the residual lowers the error, but one can have a low error with a high residual and vice versa.

As explained in Sec.~\ref{sec:intro}, (projected) residual-based methods are prevalent for many reinforcement learning approaches. We consider a policy-based residual rather than a value-based one to ease the comparison, but it is worth studying the reason for such a different behavior.

\subsection{Using the ideal distribution}
\label{subsec:dist_ideal}

The lower the concentrability coefficient $\|\frac{d_{\mu,\pi_*}}{\nu}\|_\infty$ is, the better the bounds in Thm.~\ref{th:ps} and~\ref{th:rps} are. This coefficient is minimized for $\nu = d_{\mu,\pi_*}$. This is an unrealistic case ($\pi_*$ is unknown), but since we work with known MDPs we can compute this quantity (the model being known), for the sake of a complete empirical analysis. Therefore, PS($d_{\mu,\pi_*}$) and RPS($d_{\mu,\pi_*}$) are compared in Fig.~\ref{fig:small_dnupi}. We highlight the fact that the errors and the residuals shown in this figure are measured respectively to the distribution of interest $\mu$, and not the distribution $d_{\mu,\pi_*}$ used for the optimization.

\begin{figure}[tbh]
  \begin{minipage}[l]{0.24\linewidth}
    \includegraphics[width=\linewidth]{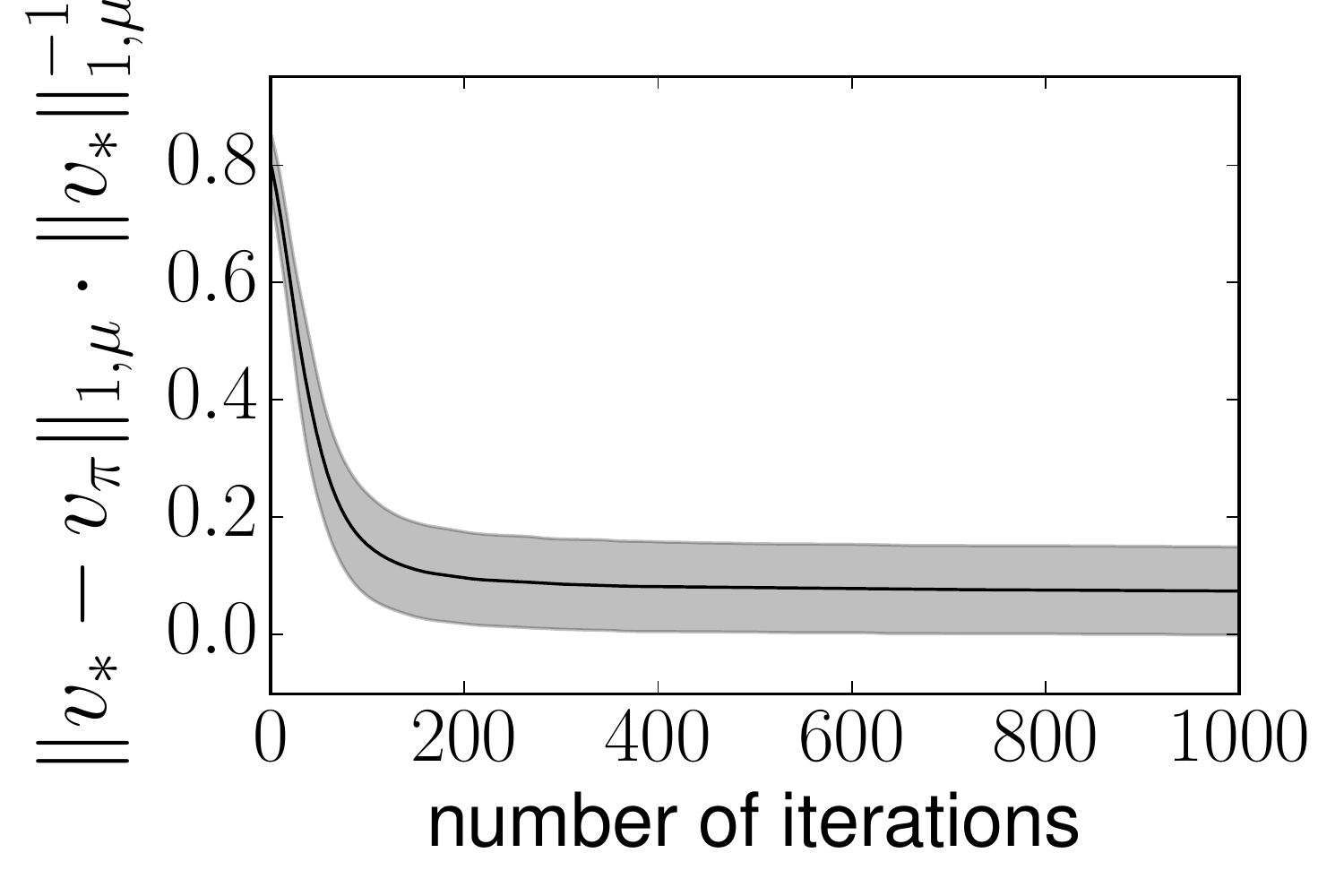}
    \center{a. Error for PS($d_{\mu,\pi_*}$).}
  \end{minipage}
  \begin{minipage}[l]{0.24\linewidth}
    \includegraphics[width=\linewidth]{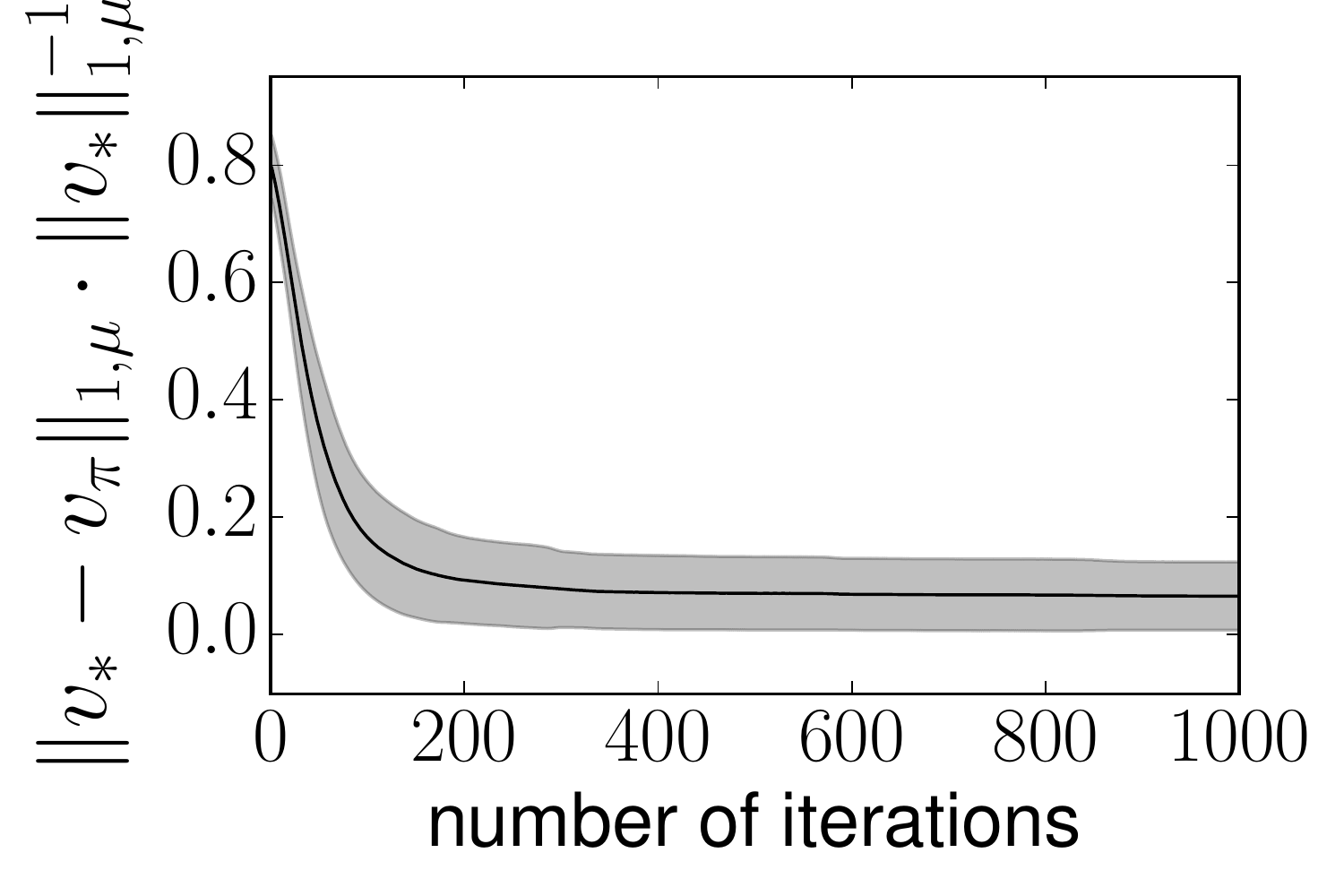}
    \center{b. Error for RPS($d_{\mu,\pi_*}$).}
  \end{minipage}
  \begin{minipage}[l]{0.24\linewidth}
    \includegraphics[width=\linewidth]{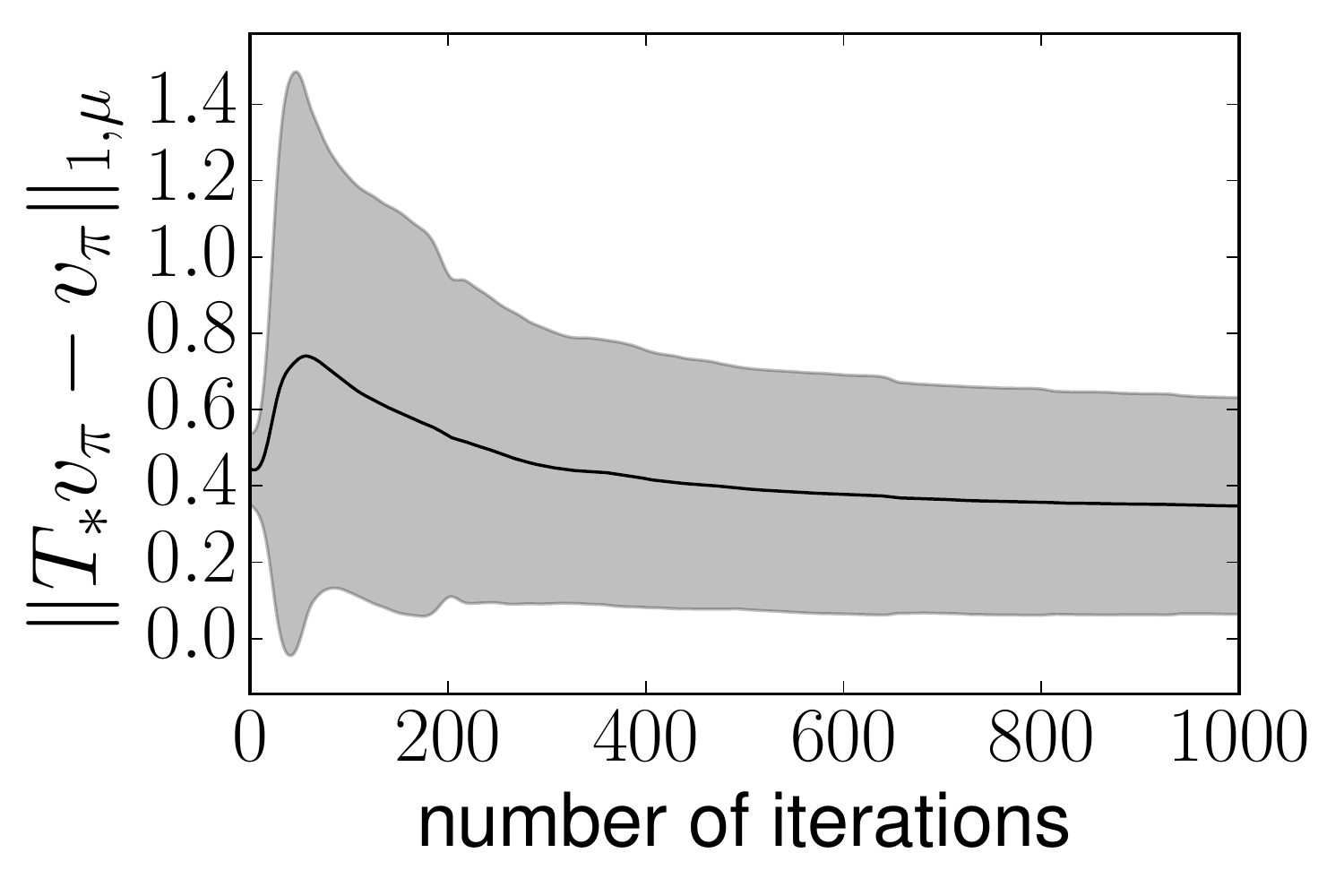}
    \center{c. Residual for PS($d_{\mu,\pi_*}$).}
  \end{minipage}
  \begin{minipage}[l]{0.24\linewidth}
    \includegraphics[width=\linewidth]{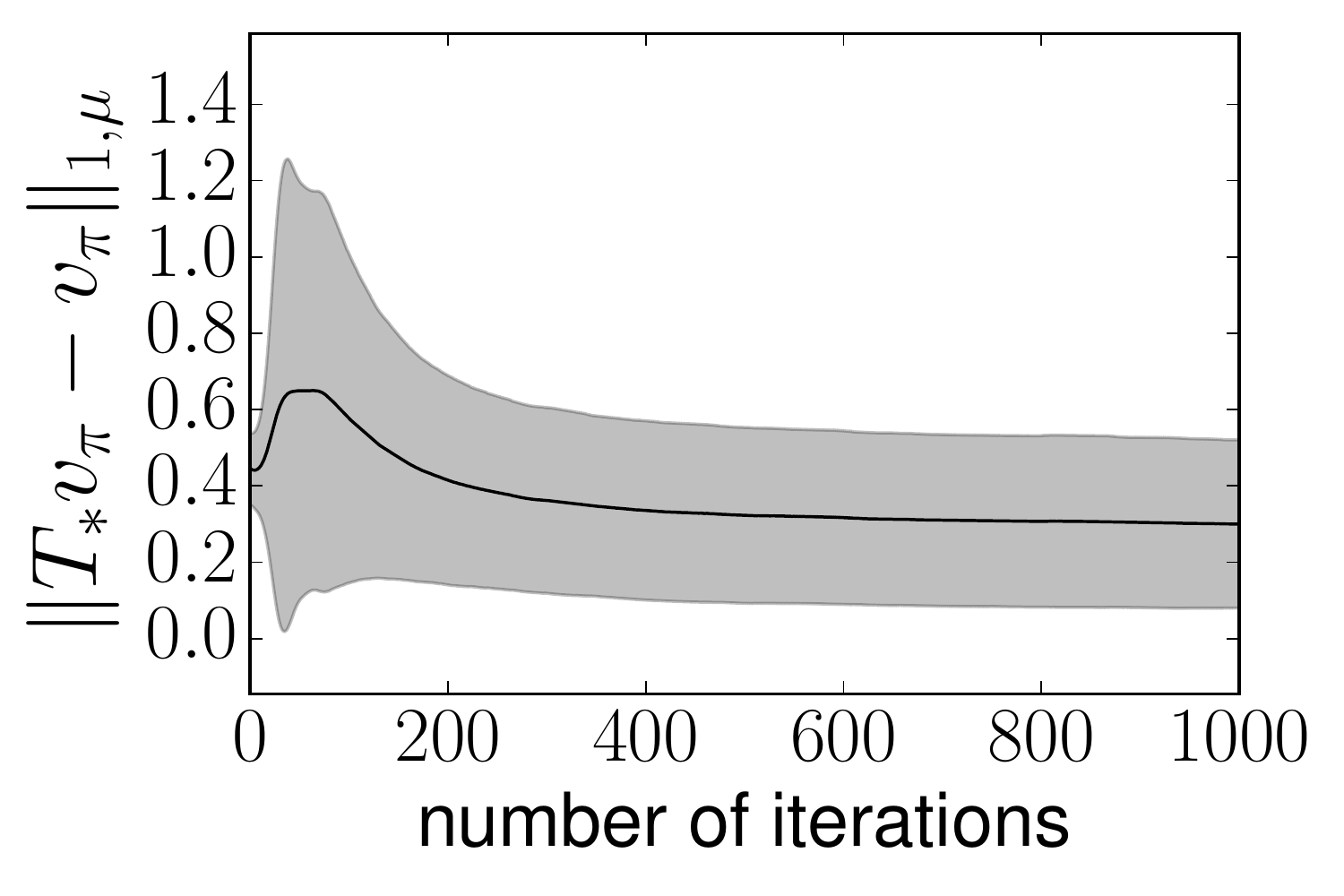}
    \center{d. Residual for RPS($d_{\mu,\pi_*}$).}
  \end{minipage}
  \caption{Results on the Garnet problems, when $\nu = d_{\mu,\pi_*}$. \label{fig:small_dnupi}}
\end{figure}

Fig.~\ref{fig:small_dnupi}.a shows that PS($d_{\mu,\pi_*}$) succeeds in decreasing the error $\|v_* - v_{\pi}\|_{1,\mu}$. However, comparing Fig.~\ref{fig:small_dnupi}.a to Fig.~\ref{fig:small_nu}.a, there is no significant gain in using $\nu = d_{\mu,\pi_*}$ instead of $\nu = \mu$. This suggests that the dependency of the bound in Thm.~\ref{th:ps} on the concentrability coefficient is not tight. Fig.~\ref{fig:small_dnupi}.c shows how the corresponding residual evolves. Again, there is no strong correlation between the residual and the error.

Fig.~\ref{fig:small_dnupi}.d shows how the residual $\|T_*v_\pi - v_\pi\|_{1,\mu}$ evolves for RPS($d_{\mu,\pi_*}$). It is not decreasing, but it is not what is optimized (the residual $\|T_* v_\pi - v_\pi\|_{1,d_{\mu,\pi_*}}$, not shown, decreases indeed, in a similar fashion than Fig.~\ref{fig:small_nu}.d). Fig.~\ref{fig:small_dnupi}.b shows how the related error evolves. Compared to Fig.~\ref{fig:small_dnupi}.a, there is no significant difference. The behavior of the residual is similar for both methods (Figs.~\ref{fig:small_dnupi}.c and~\ref{fig:small_dnupi}.d).

Overall, this suggests that controlling the residual (RPS) allows controlling the error, but that this requires a wise choice for the distribution $\nu$. On the other hand, controlling directly the error (PS) is much less sensitive to this. In other words, this suggests a stronger dependency of the residual approach to the mismatch between the sampling distribution and the discounted state occupancy measure induced by the optimal policy.

\subsection{Varying the sampling distribution}

This experiment is designed to study the effect of the mismatch between the distributions. We sample
100 Garnets $G(30,4,2)$, as well as associated feature sets $F(8,3)$. The distribution of interest is no longer the uniform distribution, but a measure that concentrates on a single starting state of interest $s_0$: $\mu(s_0) =1$. This is an adverserial case, as it implies that $\|\frac{d_{\mu,\pi_*}}{\mu}\|_\infty = \infty$: the branching factor being equal to 2, the optimal policy $\pi_*$ cannot concentrate on $s_0$.

The sampling distribution is defined as being a mixture between the distribution of interest and the ideal distribution. For $\alpha \in [0,1]$, $\nu_\alpha$ is defined as
$\nu_\alpha = (1-\alpha) \mu + \alpha d_{\mu,\pi_*}$.
It is straightforward to show that in this case the concentrability coefficient is indeed $\frac{1}{\alpha}$ (with the convention that $\frac{1}{0} = \infty$):
\begin{equation}
  \left\|\frac{d_{\mu,\pi_*}}{\nu_\alpha}\right\|_\infty
  = \max\left(\frac{d_{\mu,\pi_*}(s_0)}{(1-\alpha)  + \alpha d_{\mu,\pi_*}(s_0)}; \frac{1}{\alpha}\right)
  = \frac{1}{\alpha}.
\end{equation}

For each MDP, the learning (for PS($\nu_\alpha$) and RPS($\nu_\alpha$)) is repeated, from the same initial policy, by setting $\alpha = \frac{1}{k}$, for $k\in[1;25]$.  Let $\pi_{t,\text{x}}$ be the policy learnt by algorithm $\text{x}$ (PS or RPS) at iteration $t$, the integrated error (resp. integrated residual) is defined as
\begin{equation}
  \frac{1}{T} \sum_{t=1}^T \frac{\|v_* - v_{\pi_{t,\text{x}}}\|_{1,\mu}}{\|v_*\|_{1,\mu}}
  \quad\text{ (resp. }
  \frac{1}{T} \sum_{t=1}^T \|T_*v_{\pi_{t,\text{x}}} - v_{\pi_{t,\text{x}}}\|_{1,\mu}
  \text{)}.
\end{equation}
Notice that here again, the integrated error and residual are defined with respect to $\mu$, the distribution of interest, and not $\nu_\alpha$, the sampling distribution used for optimization. We get an integrated error (resp. residual) for each value of $\alpha = \frac{1}{k}$, and represent it as a function of $k = \|\frac{d_{\mu,\pi_*}}{\nu_\alpha}\|_\infty$, the concentrability coefficient. Results are presented in Fig.~\ref{fig:var_nu}, that shows these functions averaged across the
100 randomly generated MDPs (mean$\pm$standard deviation as before, minimum and maximum values are shown in dashed line).

\begin{figure}[tbh]
  \begin{minipage}[l]{0.24\linewidth}
    \includegraphics[width=\linewidth]{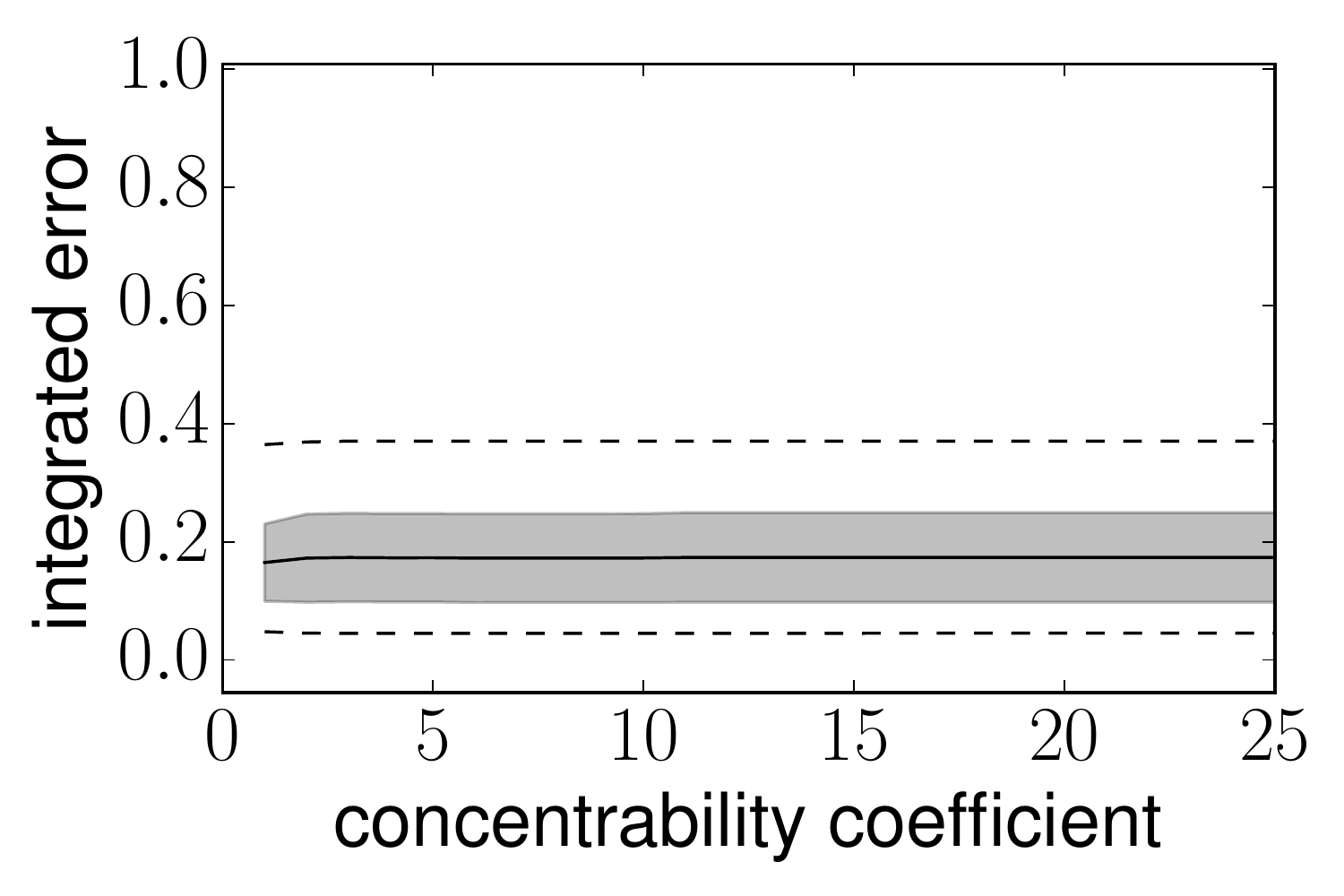}
    \center{a. Integrated error for PS($\nu_\alpha$).}
  \end{minipage}
  \begin{minipage}[l]{0.24\linewidth}
    \includegraphics[width=\linewidth]{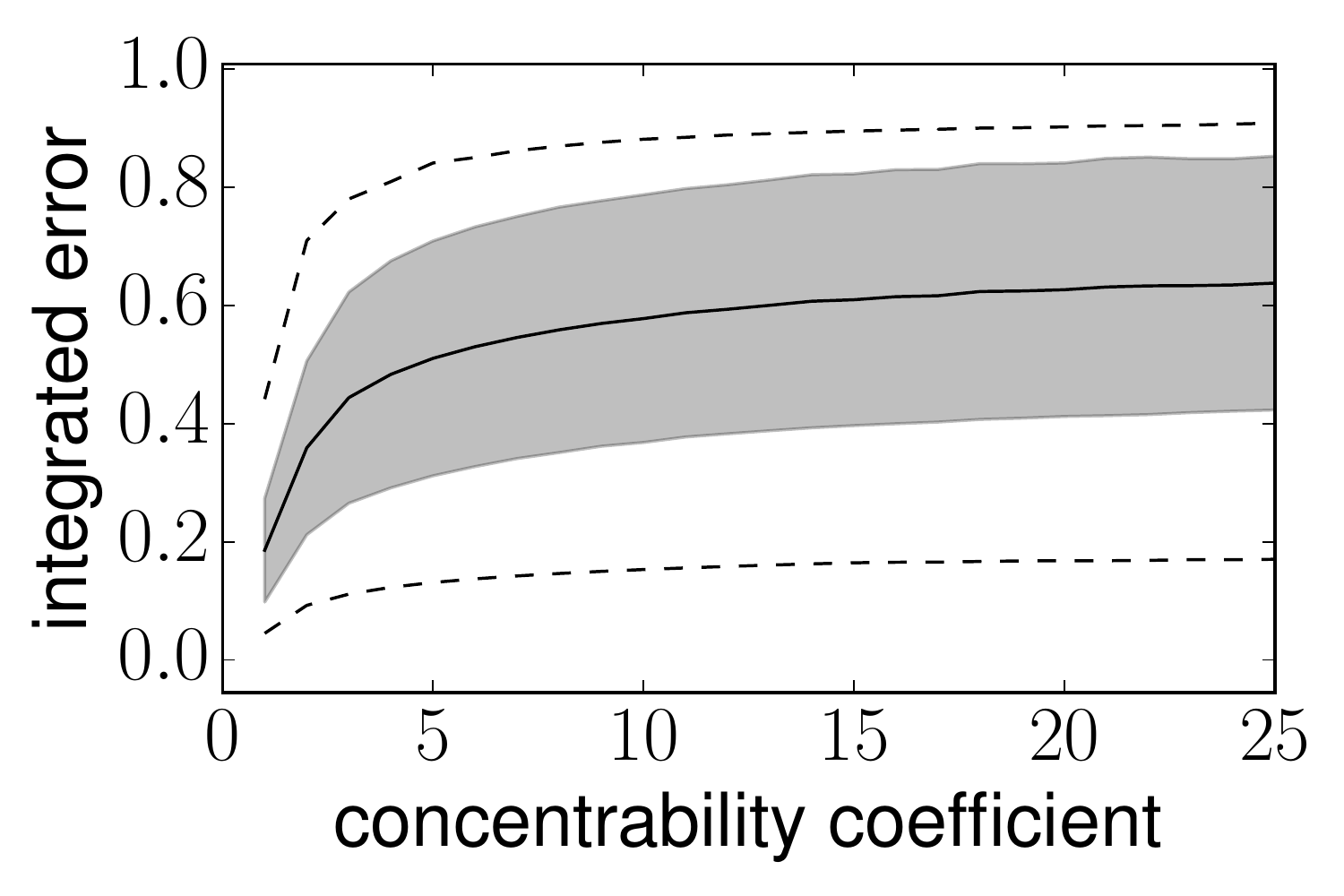}
    \center{b. Integrated error for RPS($\nu_\alpha$).}
  \end{minipage}
  \begin{minipage}[l]{0.24\linewidth}
    \includegraphics[width=\linewidth]{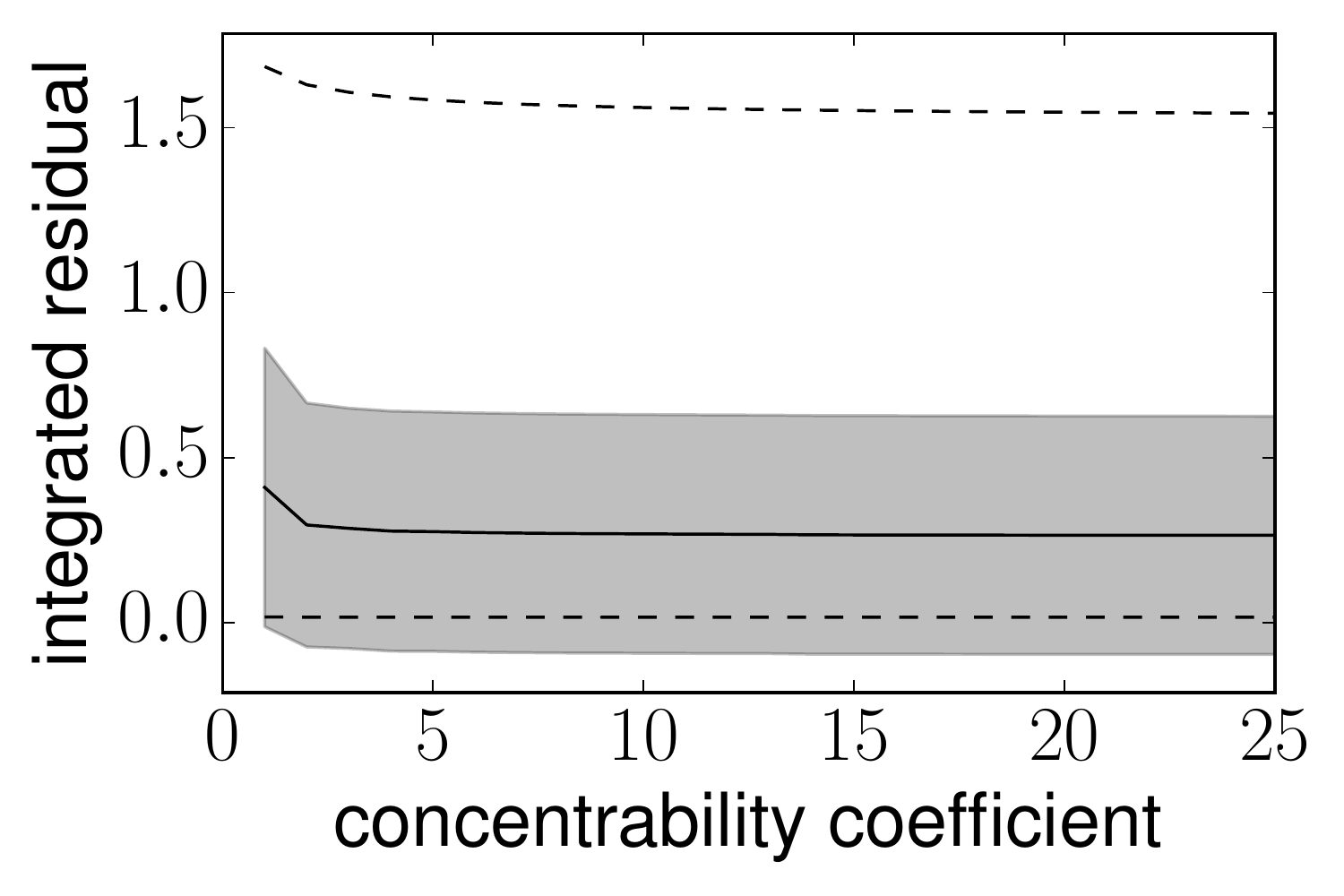}
    \center{c. Integrated residual for PS($\nu_\alpha$).}
  \end{minipage}
  \begin{minipage}[l]{0.24\linewidth}
    \includegraphics[width=\linewidth]{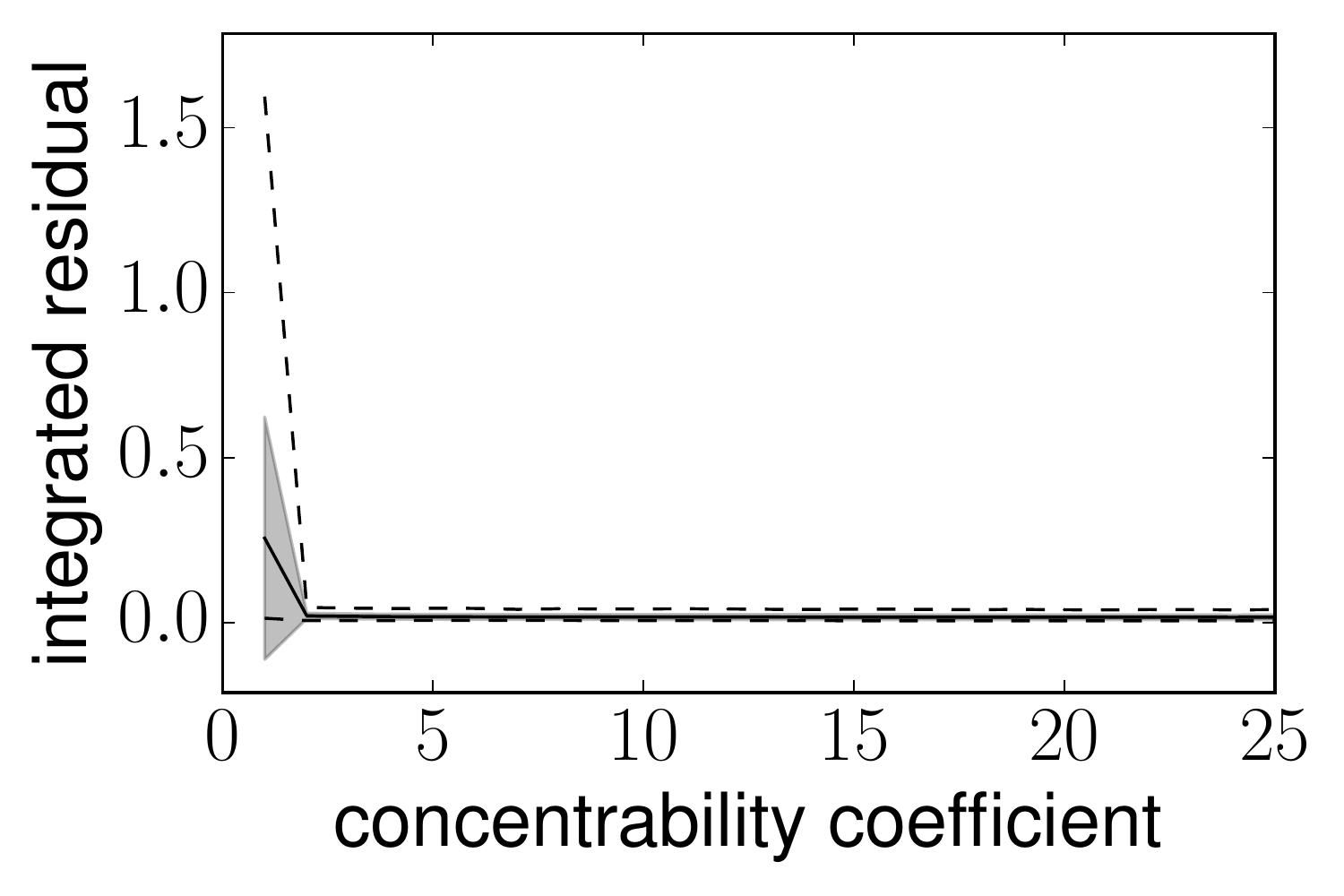}
    \center{d. Integrated residual for RPS($\nu_\alpha$).}
  \end{minipage}
  \caption{Results for the sampling distribution $\nu_\alpha$.
  \label{fig:var_nu}}
\end{figure}

Fig.~\ref{fig:var_nu}.a shows the integrated error for PS($\nu_\alpha$). It can be observed that the mismatch between measures has no influence on the efficiency of the algorithm. Fig.~\ref{fig:var_nu}.b shows the same thing for RPS($\nu_\alpha$). The integrated error increases greatly as the mismatch between the sampling measure and the ideal one increases (the value to which the error saturates correspond to no improvement over the initial policy). Comparing both figures, it can be observed that RPS performs as well as PS only when the ideal distribution is used (this corresponds to a concentrability coefficient of 1).
Fig.~\ref{fig:var_nu}.c and~\ref{fig:var_nu}.d show the integrated residual for each algorithm. It can be observed that RPS consistently achieves a lower residual than PS.

Overall, this suggests that using the Bellman residual as a proxy is efficient only if the sampling distribution is close to the ideal one, which is difficult to achieve in general (the ideal distribution $d_{\mu,\pi_*}$ being unknown). On the other hand, the more direct approach consisting in maximizing the mean value is much more robust to this issue (and can, as a consequence, be considered directly with the distribution of interest).

One could argue that the way we optimize the considered objective function is rather naive (for example, considering a constant learning rate). But this does not change the conclusions of this experimental study, that deals with how the error and the Bellman residual are related and with how the concentrability influences each optimization approach. This point is developed in the appendix.

\section{Conclusion}
\label{sec:discussion}

The aim of this article was to compare two optimization approaches to reinforcement learning: minimizing a Bellman residual and maximizing the mean value. As said in Sec.~\ref{sec:intro}, Bellman residuals are prevalent in ADP. Notably, value iteration minimizes such a residual using a fixed-point approach and policy iteration minimizes it with a Newton descent. On another hand, maximizing the mean value (Sec.~\ref{sec:background}) is prevalent in policy search approaches.

As Bellman residual minimization methods are naturally value-based and mean value maximization approaches policy-based, we introduced a policy-based residual minimization algorithm in order to study both optimization problems together. For the introduced residual method, we proved a proxy bound, better than value-based residual minimization. The different nature of the bounds of Th.~\ref{th:ps} and~\ref{th:rps} made the comparison difficult, but both involve the same concentrability coefficient, a term often underestimated in RL bounds.

Therefore, we compared both approaches empirically on a set of randomly generated Garnets, the study being designed to quantify the influence of this concentrability coefficient. From these experiments, it appears that the Bellman residual is a good proxy for the error (the distance to the optimal value function) only if, luckily, the concentrability coefficient is small for the considered MDP and the distribution of interest, or one can afford a change of measure for the optimization problem, such that the sampling distribution is close to the ideal one.
Regarding this second point, one can change to a measure different from the ideal one, $d_{\mu,\pi_*}$ (for example, using for $\nu$ a uniform distribution when the distribution of interest concentrates on a single state would help), but this is difficult in general (one should know roughly where the optimal policy will lead to).
Conversely, maximizing the mean value appears to be insensitive to this problem. This suggests that the Bellman residual is generally a bad proxy to policy optimization, and that maximizing the mean value is more likely to result in efficient and robust reinforcement learning algorithms, despite the current lack of deep theoretical analysis.

This conclusion might seems obvious, as maximizing the mean value is a more direct approach, but this discussion has never been addressed in the literature, as far as we know, and we think it to be important, given the prevalence of (projected) residual minimization in value-based RL.

\newpage

\bibliographystyle{plain}
\bibliography{rps}

\begin{thebibliography}{10}

\bibitem{Antos:2008}
Andr\'as Antos, Csaba Szepesv\'ari, and R\'emi Munos.
\newblock {Learning near-optimal policies with Bellman-residual minimization
  based fitted policy iteration and a single sample path}.
\newblock {\em Machine Learning}, 71(1):89--129, 2008.

\bibitem{archibald1995generation}
TW~Archibald, KIM McKinnon, and LC~Thomas.
\newblock {On the generation of Markov decision processes}.
\newblock {\em Journal of the Operational Research Society}, pages 354--361,
  1995.

\bibitem{Baird:1995}
Leemon~C. Baird.
\newblock {Residual Algorithms: Reinforcement Learning with Function
  Approximation}.
\newblock In {\em International Conference on Machine Learning (ICML)}, pages
  30--37, 1995.

\bibitem{bhatnagar2009natural}
Shalabh Bhatnagar, Richard~S Sutton, Mohammad Ghavamzadeh, and Mark Lee.
\newblock Natural actor-critic algorithms.
\newblock {\em Automatica}, 45(11):2471--2482, 2009.

\bibitem{Bradtke:1996}
Steven~J. Bradtke and Andrew~G. Barto.
\newblock {Linear Least-Squares algorithms for temporal difference learning}.
\newblock {\em Machine Learning}, 22(1-3):33--57, 1996.

\bibitem{de2003linear}
Daniela~Pucci de~Farias and Benjamin Van~Roy.
\newblock The linear programming approach to approximate dynamic programming.
\newblock {\em Operations research}, 51(6):850--865, 2003.

\bibitem{deisenroth2013survey}
Marc~Peter Deisenroth, Gerhard Neumann, Jan Peters, et~al.
\newblock {A Survey on Policy Search for Robotics}.
\newblock {\em Foundations and Trends in Robotics}, 2(1-2):1--142, 2013.

\bibitem{Desai:2012:ADP:2340685.2340698}
Vijay~V. Desai, Vivek~F. Farias, and Ciamac~C. Moallemi.
\newblock Approximate dynamic programming via a smoothed linear program.
\newblock {\em Oper. Res.}, 60(3):655--674, 2012.

\bibitem{Ernst:2005b}
Damien Ernst, Pierre Geurts, and Louis Wehenkel.
\newblock {Tree-Based Batch Mode Reinforcement Learning}.
\newblock {\em Journal of Machine Learning Research}, 6:503--556, 2005.

\bibitem{filar1991algorithm}
Jerzy~A Filar and Boleslaw Tolwinski.
\newblock {On the Algorithm of Pollatschek and Avi-ltzhak}.
\newblock {\em Stochastic Games And Related Topics}, pages 59--70, 1991.

\bibitem{Gordon:1995}
Geoffrey Gordon.
\newblock {Stable Function Approximation in Dynamic Programming}.
\newblock In {\em International Conference on Machine Learning (ICML)}, 1995.

\bibitem{kakade2002approximately}
Sham Kakade and John Langford.
\newblock Approximately optimal approximate reinforcement learning.
\newblock In {\em International Conference on Machine Learning (ICML)}, 2002.

\bibitem{Lagoudakis:2003b}
Michail~G. Lagoudakis and Ronald Parr.
\newblock Least-squares policy iteration.
\newblock {\em Journal of Machine Learning Research}, 4:1107--1149, 2003.

\bibitem{lazaric2010analysis}
Alessandro Lazaric, Mohammad Ghavamzadeh, and R{\'e}mi Munos.
\newblock Analysis of a classification-based policy iteration algorithm.
\newblock In {\em International Conference on Machine Learning (ICML)}, 2010.

\bibitem{lillicrap2015continuous}
Timothy~P Lillicrap, Jonathan~J Hunt, Alexander Pritzel, Nicolas Heess, Tom
  Erez, Yuval Tassa, David Silver, and Daan Wierstra.
\newblock Continuous control with deep reinforcement learning.
\newblock In {\em International Conference on Learning Representations (ICLR)},
  2016.

\bibitem{maei2010toward}
Hamid~R Maei, Csaba Szepesv{\'a}ri, Shalabh Bhatnagar, and Richard~S Sutton.
\newblock Toward off-policy learning control with function approximation.
\newblock In {\em International Conference on Machine Learning (ICML)}, 2010.

\bibitem{munos2007performance}
R{\'e}mi Munos.
\newblock Performance bounds in $\ell_p$-norm for approximate value iteration.
\newblock {\em SIAM journal on control and optimization}, 46(2):541--561, 2007.

\bibitem{perolat2016newton}
Julien P\'erolat, Bilal Piot, Matthieu Geist, Bruno Scherrer, and Olivier
  Pietquin.
\newblock {Softened Approximate Policy Iteration for Markov Games}.
\newblock In {\em {International Conference on Machine Learning (ICML)}}, 2016.

\bibitem{piot2014dcrl}
Bilal Piot, Matthieu Geist, and Olivier Pietquin.
\newblock {Difference of Convex Functions Programming for Reinforcement
  Learning}.
\newblock In {\em {Advances in Neural Information Processing Systems (NIPS)}},
  2014.

\bibitem{scherrer2010should}
Bruno Scherrer.
\newblock {Should one compute the Temporal Difference fix point or minimize the
  Bellman Residual? The unified oblique projection view}.
\newblock In {\em International Conference on Machine Learning (ICML)}, 2010.

\bibitem{scherrer2014approximate}
Bruno Scherrer.
\newblock {Approximate Policy Iteration Schemes: A Comparison}.
\newblock In {\em International Conference on Machine Learning (ICML)}, pages
  1314--1322, 2014.

\bibitem{scherrer2014local}
Bruno Scherrer and Matthieu Geist.
\newblock {Local Policy Search in a Convex Space and Conservative Policy
  Iteration as Boosted Policy Search}.
\newblock In {\em {European Conference on Machine Learning and Principles and
  Practice of Knowledge Discovery in Databases (ECML/PKDD)}}, 2014.

\bibitem{schulman2015trust}
John Schulman, Sergey Levine, Pieter Abbeel, Michael Jordan, and Philipp
  Moritz.
\newblock Trust region policy optimization.
\newblock In {\em International Conference on Machine Learning (ICML)}, 2015.

\bibitem{sutton1999policy}
Richard~S Sutton, David~A McAllester, Satinder~P Singh, Yishay Mansour, et~al.
\newblock {Policy Gradient Methods for Reinforcement Learning with Function
  Approximation}.
\newblock In {\em Neural Information Processing Systems (NIPS)}, volume~99,
  pages 1057--1063, 1999.

\end{thebibliography}

\newpage
\appendix
\setcounter{theorem}{1}

\section{Proof of Theorem~2}

\begin{theorem}[Subgradient of $\J_\nu$]
  Recall that given the considered notations, the distribution $\nu \p_{\g(v_\pi)}$ is the state distribution obtained by sampling the initial state according to $\nu$, applying the action being greedy with respect to $v_\pi$ and following the dynamics to the next state. This being said,  the subgradient of $\J_\nu$ is given by
  \begin{equation}
    -\nabla \J_\nu(\pi) =
    \frac{1}{1-\gamma}\sum_{s,a}\left(d_{\nu,\pi}(s) - \gamma d_{\nu \p_{\g(v_\pi)}, \pi}(s)\right) \pi(a|s) \nabla \ln \pi(a|s) q_\pi(s,a),
  \end{equation}
  with $q_\pi(s,a) = \rr(s,a) + \gamma \sum_{s'\in\s} \p(s'|s,a) v_\pi(s')$.
\end{theorem}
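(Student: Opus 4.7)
The plan is to exploit the identity $\J_\nu(\pi)=\nu(T_* v_\pi - v_\pi)=\nu T_* v_\pi - \nu v_\pi$, which holds because $T_* v_\pi \geq v_\pi$ componentwise, so the weighted $\ell_1$-norm collapses to an expectation (as noted in Sec.~\ref{sec:background}). This splits $\J_\nu$ into two pieces, each of which can be differentiated by a standard policy-gradient calculation, after properly handling the non-smooth $\max$ inside $T_*$.

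For the non-smooth term, I would pick any greedy policy $\pi_g \in \g(v_\pi)$ and use the Danskin / envelope argument: since $\nu T_* v_\pi = \max_{\pi'} \nu(\rr_{\pi'} + \gamma \p_{\pi'} v_\pi)$ is a pointwise maximum of smooth functions of $\pi$ (via the dependence of $v_\pi$ on $\pi$), a valid subgradient is obtained by freezing $\pi'=\pi_g$ and differentiating only through $v_\pi$. This yields the local representation $\nu T_* v_\pi = \nu \rr_{\pi_g} + \gamma\, \nu \p_{\pi_g} v_\pi$ whose gradient in $\pi$ is $\gamma(\nu \p_{\pi_g})\,\nabla v_\pi$, since $\rr_{\pi_g}$ and $\p_{\pi_g}$ do not depend on $\pi$. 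Combining with $\nabla(\nu v_\pi) = \nu \nabla v_\pi$ gives the compact expression $\nabla \J_\nu(\pi) = (\gamma\, \nu \p_{\pi_g} - \nu)\,\nabla v_\pi$.

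The final step is to apply the classical policy gradient theorem twice, once with sampling distribution $\nu$ and once with sampling distribution $\nu \p_{\pi_g} = \nu \p_{\g(v_\pi)}$ (both are genuine distributions on $\s$). Each application rewrites $\mu \nabla v_\pi$ as $\frac{1}{1-\gamma} \sum_{s,a} d_{\mu,\pi}(s)\pi(a|s)\nabla\ln\pi(a|s) q_\pi(s,a)$. Subtracting the two expansions and flipping the sign yields
\begin{equation}
-\nabla \J_\nu(\pi) = \frac{1}{1-\gamma}\sum_{s,a}\bigl(d_{\nu,\pi}(s) - \gamma\, d_{\nu \p_{\g(v_\pi)},\pi}(s)\bigr)\pi(a|s)\nabla\ln\pi(a|s) q_\pi(s,a),
\end{equation}
which is the claimed formula.

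The main obstacle I anticipate is the justification of the subgradient step: one must be careful that $v_\pi$ itself depends on $\pi$ inside $\nu T_* v_\pi$, so applying Danskin requires viewing $\nu T_* v_\pi$ as $\max_{\pi'} F(\pi,\pi')$ with $F(\pi,\pi') = \nu \rr_{\pi'} + \gamma \nu \p_{\pi'} v_\pi$, and differentiating $F(\pi,\pi_g)$ only through the $v_\pi$ term. The policy gradient expansion itself is routine once one recognizes $d_{\nu \p_{\g(v_\pi)},\pi}$ as the occupancy measure started from the post-greedy-action distribution; the bookkeeping of the two occupancy measures is where one must be careful, but no deep new idea is needed beyond the standard derivation.
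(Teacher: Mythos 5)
Your proposal is correct and follows essentially the same route as the paper's proof: split $\J_\nu(\pi)=\nu T_* v_\pi-\nu v_\pi$, handle the max by freezing a greedy choice (the paper does it per-state via $a^*_s$, you do it via a greedy policy $\pi_g$, which is equivalent) and differentiating only through $v_\pi$, then apply the policy gradient theorem twice with initial distributions $\nu$ and $\nu\p_{\g(v_\pi)}$. No substantive difference.
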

\begin{proof}
  The considered objective function can be rewritten as
  \begin{equation}
    \J_\nu(\pi) = \sum_{s\in\s} \nu(s) \left(\max_{a\in\A} q_\pi(s,a) - v_\pi(s)\right).
  \end{equation}
  The classic policy gradient theorem~\cite{sutton1999policy} states that
  \begin{align}
    \nabla (\nu v_\pi) &= \sum_{s\in\s} \nu(s) \nabla v_\pi(s) \\&= \frac{1}{1-\gamma} \sum_{s\in\s} d_{\nu,\pi}(s) \sum_{a\in\A} \nabla \pi(a|s) q_\pi(s,a).
  \end{align}
  On the other hand, from basic subgradient calculus rules, we have that $\nabla \max_{a\in\A} q_\pi(s,a) = \nabla q_\pi(s,a^*_{s})$, with $a^*_{s} \in \argmax_{a\in\A} q_\pi(s,a)$. Therefore:
  \begin{align}
    &\nabla \sum_{s\in\s} \nu(s) \max_{a\in\A} q_\pi(s,a)
    \\
    &= \sum_{s\in\s} \nu(s) \nabla q_\pi(s,a^*_{s}),
    \\
    &= \sum_{s\in\s} \nu(s) \nabla \left(\rr(s,a^*_{s}) + \gamma \sum_{s'\in\s} \p(s'|s,a^*_{s}) v_\pi(s')\right)
    \\
    &= \gamma \sum_{s\in\s} \nu(s) \sum_{s'\in\s} \p(s'|s,a^*_{s}) \nabla v_\pi(s').
  \end{align}
  By noticing that $\nu(s) \sum_{s'\in\s} \p(s'|s,a^*_{s}) = [\nu \p_{\g(v_\pi)}](s)$ and that $\nabla \pi(a|s) = \pi(a|s)\nabla\ln\pi(a|s)$, we obtain the stated result.
\end{proof}

\section{Proof of Theorem~3}

\begin{theorem}[Proxy bound for residual policy search]
  We have that
  \begin{equation}
    \|v_* - v_\pi\|_{1,\mu} \leq \frac{1}{1-\gamma} \left\|\frac{d_{\mu,\pi_*}}{\nu}\right\|_\infty \J_\nu(\pi)
    = \frac{1}{1-\gamma} \left\|\frac{d_{\mu,\pi_*}}{\nu}\right\|_\infty \|T_* v_\pi - v_\pi\|_{1,\nu}.
  \end{equation}
\end{theorem}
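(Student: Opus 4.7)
The plan is to start from the trivial identity $\|v_* - v_\pi\|_{1,\mu} = \mu(v_* - v_\pi)$, valid because $v_*\geq v_\pi$ componentwise, and then bound the pointwise difference $v_* - v_\pi$ by an expression involving the residual $T_* v_\pi - v_\pi$. The main tool will be the contraction-like telescoping trick that is standard in API analyses (and already present in the proofs cited in the statement): write $v_* = T_{\pi_*} v_*$ and insert $T_{\pi_*} v_\pi$ as a pivot, so that $v_* - v_\pi = T_{\pi_*} v_* - T_{\pi_*} v_\pi + (T_{\pi_*} v_\pi - v_\pi) = \gamma \p_{\pi_*}(v_* - v_\pi) + (T_{\pi_*} v_\pi - v_\pi)$.

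Next, I use $T_{\pi_*} v_\pi \leq T_* v_\pi$ by the definition of the optimal Bellman operator, to upper bound the second term by $T_* v_\pi - v_\pi$, which is the quantity appearing in $\J_\nu(\pi)$. Rearranging yields $(I - \gamma \p_{\pi_*})(v_* - v_\pi) \leq T_* v_\pi - v_\pi$; since $(I-\gamma \p_{\pi_*})^{-1} = \sum_{t\geq 0} \gamma^t \p_{\pi_*}^t$ has nonnegative entries, the inequality is preserved and I obtain
\begin{equation}
v_* - v_\pi \leq (I-\gamma \p_{\pi_*})^{-1}(T_* v_\pi - v_\pi).
\end{equation}

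Left-multiplying by $\mu$ and using the definition $d_{\mu,\pi_*} = (1-\gamma)\mu(I-\gamma \p_{\pi_*})^{-1}$ converts the right-hand side into $\frac{1}{1-\gamma} d_{\mu,\pi_*}(T_* v_\pi - v_\pi)$. Because $T_* v_\pi - v_\pi \geq 0$, this equals $\frac{1}{1-\gamma}\|T_* v_\pi - v_\pi\|_{1, d_{\mu,\pi_*}}$. A change of measure then finishes the argument: for any nonnegative $f$, $\sum_s d_{\mu,\pi_*}(s) f(s) \leq \|d_{\mu,\pi_*}/\nu\|_\infty \sum_s \nu(s) f(s)$, applied with $f = T_* v_\pi - v_\pi$, yields exactly $\frac{1}{1-\gamma}\|d_{\mu,\pi_*}/\nu\|_\infty \J_\nu(\pi)$.

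There is no real obstacle here; the proof is entirely standard once the telescoping identity is written down. The only mild subtlety worth stating carefully is the use of $T_*v_\pi \geq v_\pi$ (and not merely $T_*v\geq v$ for arbitrary $v$), which is what allows $\|\cdot\|_{1,d_{\mu,\pi_*}}$ to be identified with the expectation $d_{\mu,\pi_*}(T_*v_\pi - v_\pi)$ and makes the final change of measure legitimate. This is precisely where the fact that $v_\pi$ is the value of an actual policy is used, consistent with the footnote in the main text.
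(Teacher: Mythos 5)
Your proof is correct and follows essentially the same route as the paper's: the telescoping identity you derive, $(I-\gamma \p_{\pi_*})(v_*-v_\pi) = T_{\pi_*}v_\pi - v_\pi$, is exactly the paper's identity $v_{\pi'}-v_\pi = (I-\gamma\p_{\pi'})^{-1}(T_{\pi'}v_\pi - v_\pi)$ specialized to $\pi'=\pi_*$, followed by the same use of $T_{\pi_*}v_\pi \leq T_*v_\pi$, the definition of $d_{\mu,\pi_*}$, and the same change of measure. The only cosmetic difference is that you apply the monotonicity of $(I-\gamma\p_{\pi_*})^{-1}$ before multiplying by $\mu$ rather than after, which changes nothing.
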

\begin{proof}
  The proof can be easily derived from the analyses of~\cite{kakade2002approximately}, \cite{munos2007performance} or~\cite{scherrer2014local}. We detail it for completeness in the following.

  First, notice that for any policy $\pi$ we have that $v_*\geq v_\pi$, thus $\|v_* - v_\pi\|_{1,\mu} = \mu(v_* - v_\pi)$. Using the fact that $v_\pi = (I - \gamma \p_\pi)^{-1}\rr_\pi$, we have:
  \begin{align}
    v_{\pi'}-v_\pi &= (I-\gamma \p_{\pi'})^{-1} \rr_{\pi'} - v_\pi
    \\
    &= (I-\gamma \p_{\pi'})^{-1} (\rr_{\pi'} + \gamma \p_{\pi'} - v_\pi)
    \\
    &= (I-\gamma \p_{\pi'})^{-1} (T_{\pi'} v_\pi - v_\pi).
  \end{align}
  Using this and the fact that $T_* v_\pi \geq T_{\pi'} v_\pi$, we have that
  \begin{align}
    \mu(v_{\pi'} - v_\pi) &= \mu(I-\gamma\p_{\pi'})^{-1} (T_{\pi'} v_\pi - v_\pi)
    \\
    &= \frac{1}{1-\gamma} d_{\mu,\pi'} (T_{\pi'} v_\pi - v_\pi)
    \\
    &\leq \frac{1}{1-\gamma} d_{\mu,\pi'}(T_* v_\pi - v_\pi).
  \end{align}
  By the definition of the concentrability coefficient,  $d_{\mu,\pi'} \leq \nu \|\frac{d_{\mu,\pi'}}{\nu}\|_\infty$, and as we assumed that $\nu(T_* v_\pi - v_\pi) \leq e$, we have
  \begin{align}
    \mu(v_{\pi'} - v_\pi) &\leq \frac{1}{1-\gamma} \left\|\frac{d_{\mu,\pi'}}{\nu}\right\|_\infty \nu (T_* v_\pi - v_\pi)
    \\&\leq \frac{1}{1-\gamma} \left\|\frac{d_{\mu,\pi'}}{\nu}\right\|_\infty e.
  \end{align}
  Choosing $\pi' = \pi_*$ gives the stated bound.
\end{proof}

\section{More on the relation between the residual and the error}

\begin{figure}[tbh]
  \begin{minipage}[l]{0.49\linewidth}
    \includegraphics[width=\linewidth]{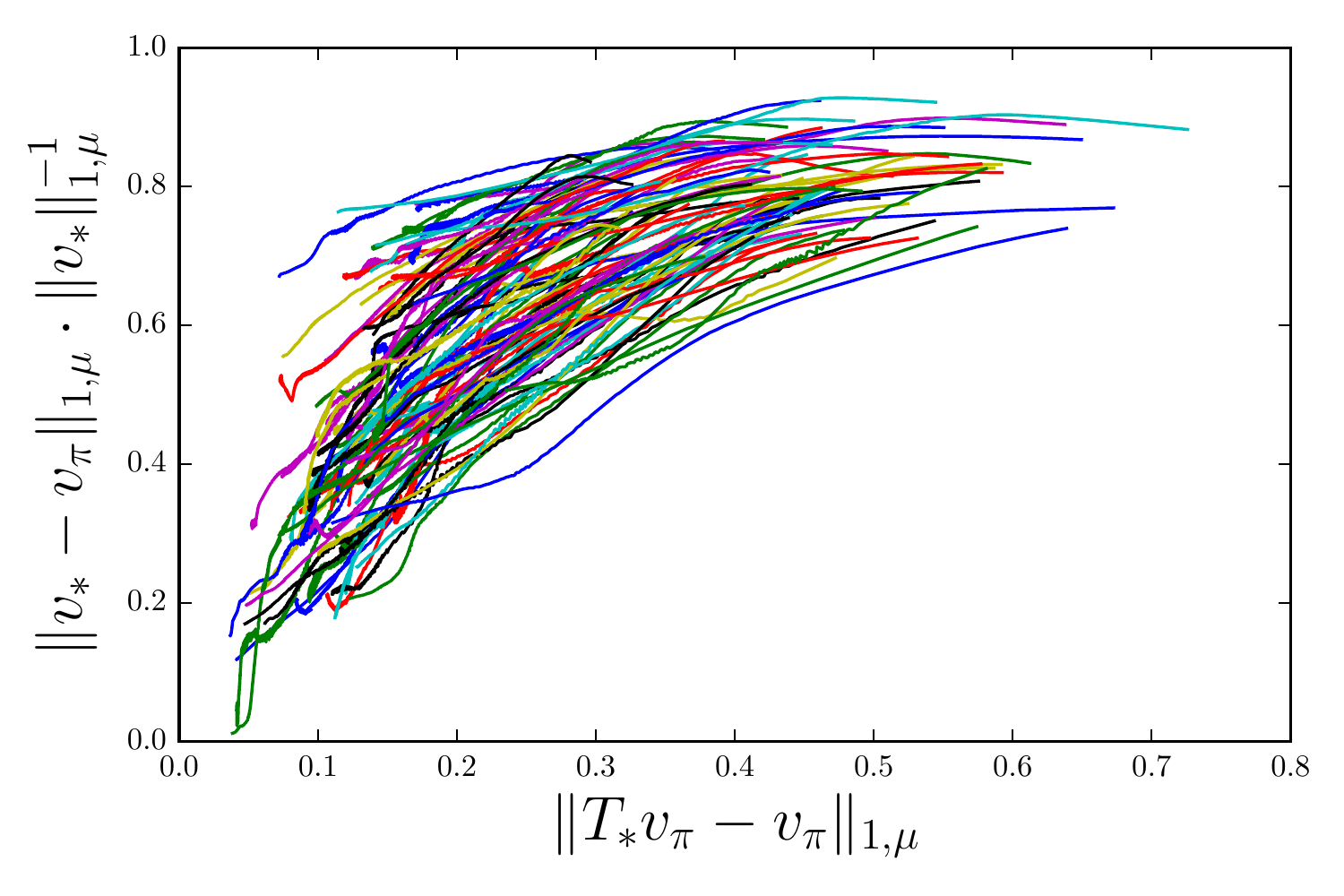}
    \center{a. For RPS($\mu$).}
  \end{minipage}
  \begin{minipage}[l]{0.49\linewidth}
    \includegraphics[width=\linewidth]{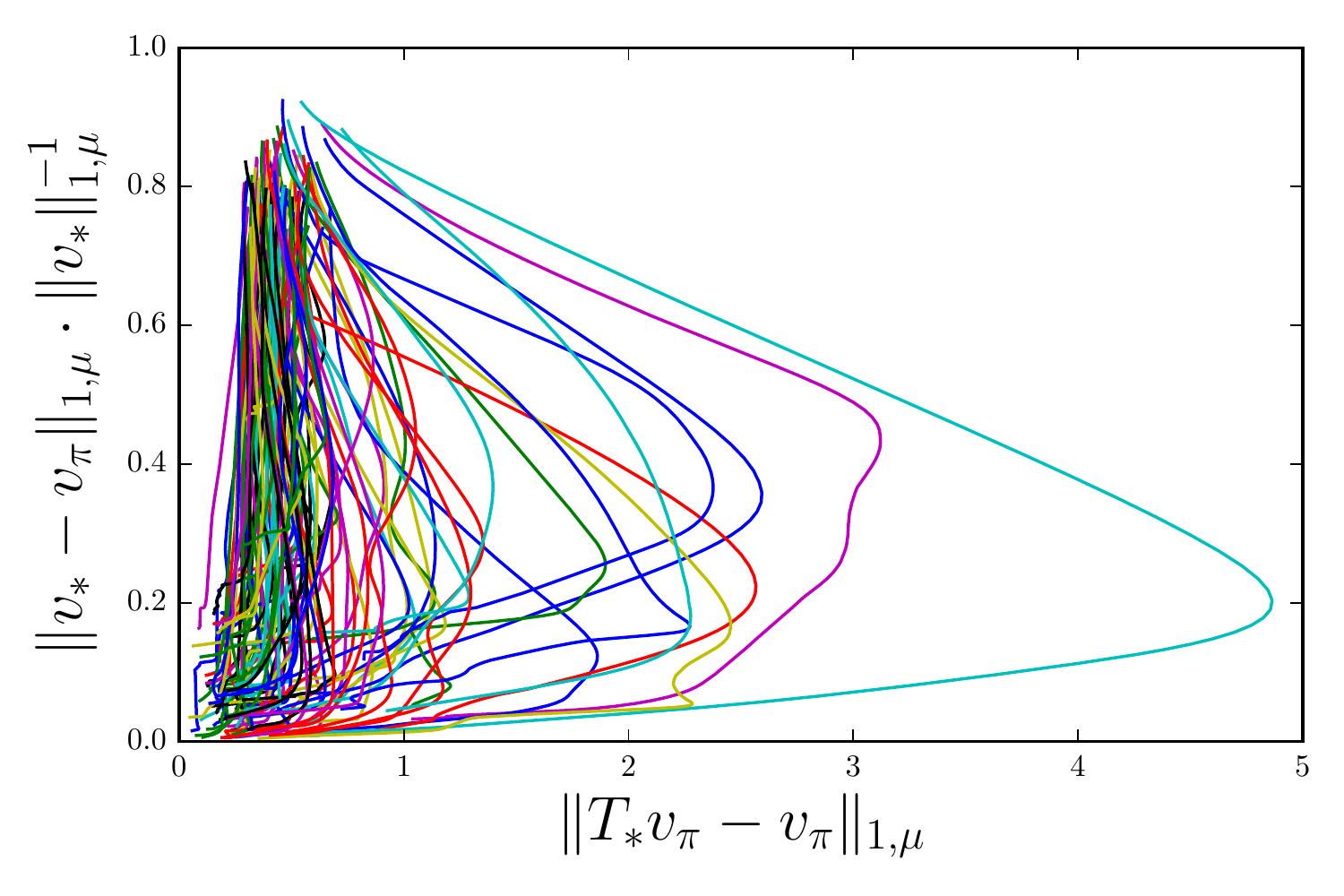}
    \center{b. For PS($\mu$).}
  \end{minipage}
  \caption{Error as a function of the residual.
  \label{fig:scatter}}
\end{figure}

As said at the end of the experimental section, one could argue that the way we optimize the considered objective function is rather naive (for example, considering a constant learning rate). But this does not change the conclusions of this experimental study, that deals with how the error and the Bellman residual are related and with how the concentrability influences each optimization approach.

To expand on this, we consider the experiment of Sec.~\ref{subsec:dist_unif}, where the distribution of interest is directly used. As this distribution is uniform, the concentrability coefficient is bounded (by the number of states), whatever the MDP is. Recall that for this experiment, we generated 100 Garnets and ran both algorithms for 1000 iterations, measuring the normalized error and the Bellman residual.

In Fig.~\ref{fig:scatter}, we show the error as a function of the Bellman residual for these experiments. These are the same data that where used for Fig.~\ref{fig:small_nu}, presented in a different manner. Each curve corresponds to the learning in one MDP. Fig.~\ref{fig:scatter}.a shows the error as a function of the Bellman residual for RPS($\mu$), and Fig.~\ref{fig:scatter}.b shows the same thing for PS($\mu$).

The important thing is that these figures depend weakly on how the optimization is performed. A wise choice of the meta-parameters (or even of the optimization algorithm) will influence how fast and how well the objective criterion is optimized, but not on the mapping from residual to errors (or the converse).

Fig.~\ref{fig:scatter}.a shows this link for RPS. To see how learning processes, take the start of a curve in the upper-right of the graph (high residual) and follow it up to the left (low residual). As the residual decreases, so the error does (depending also on the concentrability of the MDP), which is consistent with the bound of Th.~\ref{th:rps}.

Fig.~\ref{fig:scatter}.b shows this link for PS. To see how learning processes, take the start of a curve in the upper-left of the graph (high error), and follow it up to the bottom (low error). We do not observe the same behavior as before. This was to be expected, and shows mainly that the error is not a proxy to the residual. More importantly, it shows that for decreasing the error, it might be efficient to highly increase the residual. This suggests that the residual is a bad proxy to policy optimization, and that maximizing directly the mean value is much more efficient (and insensitive to the concentrability, according to the rest of the experiments, which is a very important point).

\end{document}